%% file: main.tex
\RequirePackage[svgnames]{xcolor}

\documentclass[11pt,letterpaper]{mystyle}
\usepackage{amsmath}
\usepackage{amssymb}
\usepackage[all]{hypcap}
\usepackage[svgnames]{xcolor}
\usepackage[comma,authoryear,compress]{natbib}
\usepackage{hyperref}[citecolor=lightblue]

\hypersetup{
    colorlinks = true,
    citecolor = {violet},
}

\usepackage{algorithm}
\usepackage{algorithmicx}
\usepackage{algpseudocode}
\usepackage{microtype}
\usepackage{graphicx}
\expandafter\def\csname ver@subfig.sty\endcsname{}
\usepackage{booktabs} %
\usepackage{float}
\usepackage{bigstrut}

\usepackage{mathtools}
\usepackage{amsthm}
\usepackage{mathrsfs}
\usepackage{nicefrac}
\usepackage{dsfont}
\usepackage{enumitem}
\usepackage{subcaption}
\usepackage{graphicx,subfig}
\usepackage{cleveref}
\usepackage{bxcoloremoji}
\usepackage{datetime2}
\usepackage{float}
\usepackage{physics}

\usepackage[utf8]{inputenc} %
\usepackage[T1]{fontenc}    %
\usepackage{hyperref}       %
\usepackage{url}            %
\usepackage{booktabs}       %
\usepackage{amsfonts}       %
\usepackage{nicefrac}       %
\usepackage{microtype}      %
\usepackage{graphicx}
\usepackage{subcaption} 
\usepackage{fdsymbol}
\usepackage{wrapfig}
\usepackage{lipsum}
\usepackage{enumitem}
\usepackage{stackengine}
\usepackage[font=small,labelfont=bf]{caption}
\usepackage{color}
\usepackage{adjustbox}
\usepackage{multirow, multicol}

\usepackage{rotating}
\usepackage{makecell}

\usepackage{tcolorbox}
\tcbuselibrary{listings,skins,breakable}
\usepackage{listings}
\usepackage{subcaption}

\def\P{\mathbb{P}}

\newcommand{\given}{\,|\,}
\renewcommand{\hat}{\widehat}

\input{macro}

\newtcolorbox{AIbox}[2][]{aibox,title=#2,#1}
\definecolor{lightgreen}{rgb}{0.22,0.70,0.30}%
\definecolor{Gray}{gray}{0.95}
\definecolor{Cornsilk}{rgb}{1.0, 0.97, 0.86}

\definecolor{lightblue}{HTML}{0064E0}
\definecolor{fg}{HTML}{1C2B33}
\definecolor{bg}{HTML}{F1F4F7}

\newcommand{\name}{\textsc{Rubric-ARM}}
\newcommand{\modelname}{\textsc{Rubric-ARM}}

\usepackage{amsmath}

\usepackage[all]{hypcap}
\title{\LARGE  Alternating Reinforcement Learning for Rubric-Based Reward Modeling in Non-Verifiable LLM Post-Training}

\runningtitle{Alternating Reinforcement Learning for Rubric-Based Reward Modeling}

\author{
Ran Xu\textsuperscript{1,*} \quad
Tianci Liu\textsuperscript{2,*} \quad
Zihan Dong\textsuperscript{3} \quad
Tony Yu\textsuperscript{4} \quad
Ilgee Hong\textsuperscript{4} \\
\bf Carl Yang\textsuperscript{1} \quad
Linjun Zhang\textsuperscript{3} \quad
Tuo Zhao\textsuperscript{4} \quad
Haoyu Wang\textsuperscript{5} \\
\textsuperscript{1}Emory University \quad
\textsuperscript{2}Purdue University \quad
\textsuperscript{3}Rutgers University \quad

\textsuperscript{4}Georgia Institute of Technology \quad
\textsuperscript{5}University at Albany
}

\begin{document}

\input{sections/abstract}
\maketitle
\def\thefootnote{$^{*}$}\footnotetext{These authors contributed equally to this work, order was determined randomly (by rolling a die).}\def\thefootnote{\arabic{footnote}}
\vspace{3mm}
\input{sections/introduction}
\input{sections/relatedwork}

\input{sections/method}

\input{sections/theory}

\input{sections/experiments}

\input{sections/conclusion}
\bibliography{main}

\appendix
\input{sections/appendix}
\appendix
\end{document}

%% file: macro.tex
\newcommand{\x}{\mathbf{x}}

\DeclareMathOperator*{\E}{\mathbb{E}}

\definecolor{blanchedalmond}{rgb}{1.0, 0.92, 0.8}
\definecolor{carmine}{rgb}{0.59, 0.0, 0.09}
\definecolor{lightblue}{rgb}{0.22,0.45,0.70}%

\newtheorem{theorem}{Theorem}[section]

\newtheorem{lemma}[theorem]{Lemma}

\newtheorem{proposition}[theorem]{Proposition}

\newtheorem{remark}[theorem]{Remark}

\newtheorem{assumption}[theorem]{Assumption}

\renewcommand{\mathbf}{\boldsymbol}

\renewcommand{\P}{\mathbb{P}}

\makeatletter
\def\Ddots{\mathinner{\mkern1mu\raise\p@
\vbox{\kern7\p@\hbox{.}}\mkern2mu
\raise4\p@\hbox{.}\mkern2mu\raise7\p@\hbox{.}\mkern1mu}}
\makeatother

\definecolor{amaranth}{rgb}{0.9, 0.17, 0.31}
\definecolor{antiquebrass}{rgb}{0.8, 0.58, 0.46}
\definecolor{antiquefuchsia}{rgb}{0.57, 0.36, 0.51}
\definecolor{chromeyellow}{rgb}{0.31, 0.47, 0.26}

%% file: sections/abstract.tex
\begin{abstract}
Standard reward models typically predict scalar scores that fail to capture the multifaceted nature of response quality in non-verifiable domains, such as creative writing or open-ended instruction following. To address this limitation, we propose \modelname{}, a framework that jointly optimizes a rubric generator and a judge using reinforcement learning from preference feedback. Unlike existing methods that rely on static rubrics or disjoint training pipelines, our approach treats rubric generation as a latent action learned to maximize judgment accuracy. We introduce an alternating optimization strategy to mitigate the non-stationarity of simultaneous updates, providing theoretical analysis that demonstrates how this schedule reduces gradient variance during training. Extensive experiments show that \modelname{} achieves strong performance among baselines on multiple benchmarks and significantly improves downstream policy alignment in both offline and online reinforcement learning settings.
\vspace{2mm}

\textit{Keywords: Rubrics-as-Rewards, Reward Modeling, LLM Alignment, Synthetic Data}

\vspace{5mm}

\coloremojicode{1F4C5} \textbf{Date}: \today



\coloremojicode{1F917} \textbf{Model Weights \& Checkpoints}: \href{https://huggingface.co/collections/OpenRubrics/rubricarm}{https://huggingface.co/collections/OpenRubrics/rubricarm}


\coloremojicode{1F4E7} \textbf{Contact}: 
\href{mailto:ran.xu@emory.edu}{ran.xu@emory.edu}; \href{mailto:liu3351@purdue.edu}{liu3351@purdue.edu}; \href{mailto:hwang28@albany.edu}{hwang28@albany.edu}

\end{abstract}

%% file: sections/introduction.tex
\section{Introduction}
Reward modeling serves as the compass for aligning large language models (LLMs) with human intents, typically by generating a scalar score or preference label to predict human preferences~\citep{stiennon2020learning,wang2024secrets}. However, in complex non-verifiable domain, such as creative writing or open-ended instruction following, these scalar or pairwise judgments often fail to capture the multifaceted nature of response quality~\citep{ying2025beyond}.
To address this limitation, recent advancements have shifted toward rubric-based reward modeling, where models explicitly generate structured criteria to ground their judgments~\citep{gunjal2025rubrics,liu2025openrubrics,pathak2025rubric}.
By decomposing evaluation into interpretable dimensions, rubric-based models offer transparency and improve generalization across prompt-specific evaluation axes.

Central to rubric-based evaluation is the availability of \emph{high-quality rubrics}.
To ensure rubric quality, earlier work has primarily relied on human-authored rubrics, which are expensive to produce and difficult to scale to large datasets~\citep{arora2025healthbench}.
More recent approaches seek to automate rubric construction using LLMs~\citep{viswanathan2025checklists,gunjal2025rubrics}; however, these methods are largely prompting-based and rely on fixed, frozen models for both rubric generation and response quality judgment.
Consequently, they do not update the model’s intrinsic capabilities to the target domain or the underlying preference distribution, limiting their ability to generate in-domain, preference-aligned rubrics. 
Moreover, even when learning-based components are introduced~\citep{liu2025openrubrics,rezaei2025online}, the rubric generator and the judge are treated as separate modules and trained independently rather than jointly optimized.
This decoupled training pipeline prevents deeper integration between rubric construction and judgment, leading to suboptimal evaluation signals. 
Designing effective rubric-based reward models are still challenging.

In this work, we propose \modelname, an end-to-end framework that jointly optimizes the \emph{rubric generator} and the \emph{judge} via alternating reinforcement learning (RL), enabling the two components to co-evolve and mutually reinforce one another during training. 
We formulate rubrics as \emph{latent actions} that guide the reward model in recovering the underlying preference signal, and posit that improved rubric generation directly leads to more accurate preference predictions.
To ensure stable joint optimization, \modelname{} employs an alternating training strategy that decouples the learning dynamics while preserving a shared objective. Training alternates between (i) optimizing the reward model with a fixed rubric generator to align with target preference labels, and (ii) optimizing the rubric generator with a fixed reward model to produce discriminative rubrics that maximize prediction accuracy. 

A key challenge of the alternating RL is the instability caused by simultaneous updates to both components. Our analysis reveals that early-stage exploration by the rubric generator can dominate the learning dynamics.
To mitigate this, we first stabilize the reward model under fixed rubrics before optimizing the rubric generator. 
This alternating schedule reduces variance and ensures robust optimization.

Our contributions can be summarized as follows:
\begin{itemize}[leftmargin=0.4cm]
    \item We develop \modelname{}, a rubric-based reward model to produce high-quality rubrics and precise judgments. To the best of our knowledge, this is the first approach that jointly optimizes rubric and judging via RL.
    \item We introduce an \emph{alternating RL} training algorithm that couples the rubric generator and judge through a shared correctness objective, enabling mutual improvement while stabilizing optimization. 
    \item We evaluate \modelname{} across diverse alignment settings (9 reward modeling and 6 policy benchmarks). \modelname{} outperforms strong reasoning-based judges and prior rubric-based reward models, achieving a $+4.7\%$ average gain on reward-modeling benchmarks, and consistently improves downstream policy post-training when used as the reward signal.
\end{itemize}

%% file: sections/relatedwork.tex
\section{Related Works}
\label{sec:related}
\textbf{LLM-based Reward and Judge Models.} 
While \citet{zheng2023judging} established the foundational utility of LLM-based judges. 
Subsequent research expanded the scope of reasoning to include chain-of-thoughts \citep{zhang2025generative}, self-critiques~\citep{ankner2024critiqueoutloud,yu2025self,mahan2024generative} or plan evaluations strategically~\citep{evalplanner}. \citet{liu2025inference} explore inference-time reasoning for generative reward models. Recent studies~\citep{chen2025judgelrm,chen2025rmr1,whitehouse2025j1,guo2025reward,hong2025thinkrm,xu2025incentivizing} leverage online RL to directly incentivize detailed reasoning, aiming to mitigate bias and enhance the accuracy of pointwise and pairwise scoring.

\textbf{Rubrics-based Reward Models.} Recently, rubric-based approaches have emerged as a promising direction for LLM evaluation \citep{arora2025healthbench,hashemi2024llm,pathak2025rubric,akyurek2025prbench}, alignment \citep{viswanathan2025checklists,zhang2025chasing}, and reasoning \citep{gunjal2025rubrics,zhou2025breaking,huang2025reinforcement}. 
However, a unique challenge lies in generating \emph{high-quality rubrics at scale}.
To address this, \citet{li2026rubrichub,liu2025openrubrics,xie2025auto} extract rubrics from pairwise comparison signals, while \citet{rezaei2025online,zhang2025chasing,shao2025dr} dynamically generate rubrics by leveraging policy model outputs in an online setting.

%% file: sections/method.tex
\vspace{-0.6ex}
\section{Preliminaries}
\vspace{-0.5ex}
\label{subsec:overview}
We study rubric-based reward modeling in \emph{non-verifiable} domains, where response quality cannot be directly validated against ground truth. 
The rubric-based reward model contains two parts, namely \emph{rubric generator} and \emph{judge}. The key components of \modelname{} are described as follows.

\paragraph{Rubrics.}
We define a rubric as a structured set of evaluation criteria conditioned on a prompt.
Formally, let $x$ denote a prompt, a rubric ${r}(x)=\{c_i\}_{i=1}^k$ consists of $k$ criteria, where each $c_i$ specifies a distinct aspect of response quality (e.g., factual correctness, tone, or presentation).

For training rubric-based reward models in non-verifiable domains, a pairwise preference dataset is given as 
$\mathcal{D}=\{(x_i, y^{(1)}_i, y^{(2)}_i, o^\star_i)\}_{i=1}^{N}$, 
where $x$ is a prompt, $y^{(1)}$ and $y^{(2)}$ are two candidate responses, and $o^\star\in\{0,1\}$ indicates which response is preferred
(e.g., $o^\star=1$ means $y^{(1)} \succ y^{(2)}$).
Formally, the \textbf{rubric generator} $\pi_r$ generates a rubric $r$ from the prompt as 
\begin{equation}
r \sim \pi_r(\cdot \mid x; \theta_r),
\end{equation}
while a \textbf{judge} $\pi_j$ predicts a preference $o$ with the reasoning chain $c$ conditioned on the prompt, responses, and rubric as 
\begin{equation}
(c, o) \sim \pi_j(\cdot \mid x, y^{(1)}, y^{(2)}, r; \theta_j).
\end{equation}

\paragraph{Learning Objective.} We define the preference-correctness reward
\begin{equation}
R(o, o^\star) \;=\; \mathbb{I}[o = o^\star],
\end{equation}
where $\mathbb{I}[o = o^\star]$ represents if the binary prediction extracted from $o$ aligns with ground truth $o^\star$.  

Denote $\theta_r,\theta_j$ as the parameter for $\pi_r$ and $\pi_j$ respectively, our goal is to learn $(\theta_r,\theta_j)$ that maximize expected preference correctness under generated rubrics:
\begin{equation}
\label{eq:joint_objective}
\max_{\theta_r,\theta_j}\;
\E_{(x,y^{(1)},y^{(2)},o^\star)\sim\mathcal{D}}
 \E_{r\sim \pi_r(\cdot\mid x;\theta_r)} 
\E_{(c, o)\sim \pi_j(\cdot\mid x,y^{(1)},y^{(2)},r;\theta_j)}
\big[ R(o,o^\star) \big].
\end{equation}
Since both $r$ (text) and $c, o$ (discrete decision with reasoning) are sampled actions, we optimize \cref{eq:joint_objective} with RL.

\begin{figure}[t!]
  \centering  \includegraphics[width=0.7\linewidth]{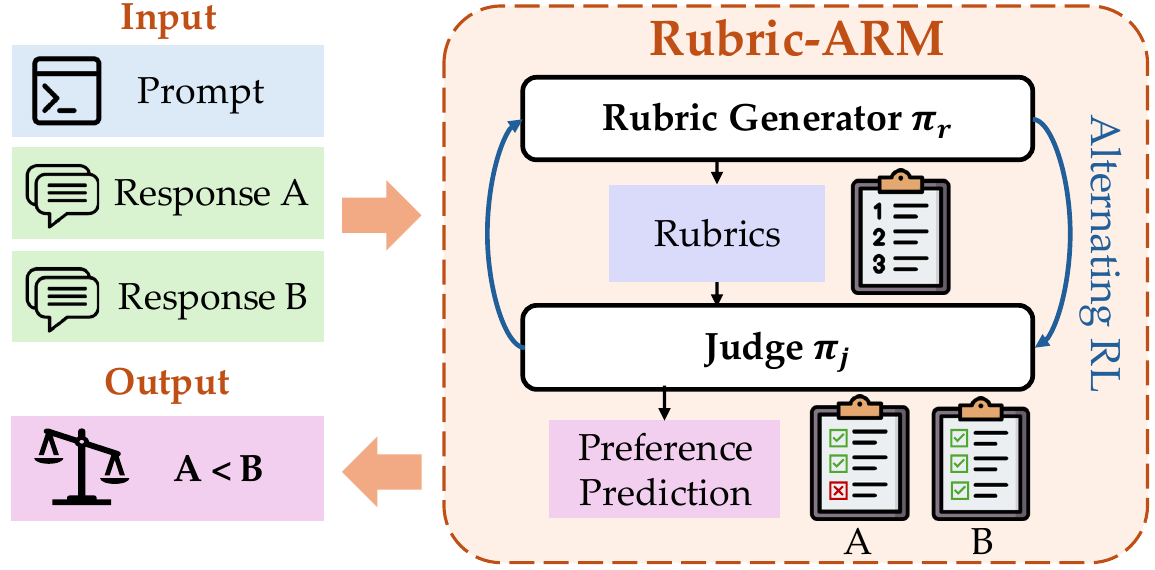}
  \caption{\centering The overall framework for \modelname{}. \vspace{-1.8ex}}
  \label{fig:rl_framework}
\end{figure}

\section{\modelname{}: Alternating RL for Rubric Generation and Judging}
\label{sec:method}
In non-verifiable domains, supervision is limited to pairwise preference feedback and rubrics are not directly observed. Simultaneously updating the rubric generator $\pi_r$ and the judge $\pi_j$ leads to non-stationary learning targets and unstable optimization. 
As shown in Figure \ref{fig:rl_framework}, {\name} addresses this challenge using an alternating RL  scheme that decouples the updates of two components. 
\subsection{Stage I: SFT Warmup} 
We equip both $\pi_j$ and $\pi_{r}$ with basic rubric generation and judging capabilities via leveraging open-source datasets. 
Following the prior work~\citep{liu2025openrubrics}, we fine-tune on synthetic rubrics and judge trajectories derived from open-source datasets including \emph{UltraFeedback}~\citep{cui2024ultrafeedback}, \emph{SkyWork}~\citep{liu2024skywork}, \emph{Magpie}~\citep{xu2025magpie}, and \emph{Synthetic Instruction Following}~\citep{lambert2025tulu}. 
Both $\pi_r(r\mid x; \theta_r)$ and $\pi_j(c,o\mid x,y^{(1)},y^{(2)},r;\theta_j)$ are trained with the standard  next-token prediction objective.

\subsection{Stage II: Alternating Reinforcement Learning}
Stage I (SFT) warm-starts the rubric generator $\pi_r$ and judge $\pi_j$ by imitating synthetic rubric generation and judging trajectories, but optimizes the two components independently and does not directly target preference correctness. 
We therefore optimize both components using \emph{alternating reinforcement learning}. 
Specifically, training switches between (i) \emph{improving the judge with a fixed rubric generator} and (ii) \emph{improving the rubric generator with a fixed judge}, providing each component with a clearer learning signal while preserving the same end objective $R(o,o^\star)$.

\paragraph{(i) RL for Judge $\pi_j$ with the current $\pi_{r}$.}
With the rubric generator parameters $\theta_r$ held fixed, we update $\theta_j$ to improve preference correctness under rubrics sampled from $\pi_r$:
\begin{equation}
\label{eq:judge_objective}
\max_{\theta_j}\;
J_j(\theta_j;\theta_r)
={}
\E_{(x,y^{(1)},y^{(2)},o^\star)\sim\mathcal{D}}
 \E_{r\sim \pi_r(\cdot\mid x;\theta_r)}
\E_{(c, o)\sim \pi_j(\cdot\mid x,y^{(1)},y^{(2)},r;\theta_j)}
\big[\mathbb{I}[o=o^\star]\big].
\end{equation}
This phase trains the judge to produce rubric-conditioned evaluations that recover the dataset preference.

Since $\pi_r(\cdot\mid x;\theta_r)$ is fixed during judge updates, we cache rubrics to reduce sampling cost and stabilize optimization. For each training instance $(x_i,y^{(1)}_i,y^{(2)}_i,o^\star_i)$, we sample a rubric $r_i\sim \pi_r(\cdot\mid x_i;\theta_r)$ once and reuse it for multiple judge optimization steps, yielding the Monte Carlo estimate:
\begin{equation}
\label{eq:judge_objective_cached}
J_j(\theta_j;\theta_r)
\approx{} \E_{(x_i,y^{(1)}_i,y^{(2)}_i,o^\star_i)\sim\mathcal{D},\, r_i} \E_{(c, o)\sim \pi_j(\cdot\mid x_i,y^{(1)}_i,y^{(2)}_i,r_i;\theta_j)}
\big[\mathbb{I}[o=o^\star_i]\big].
\end{equation}
In practice, we use a shaped reward that augments the final correctness signal $R_{\text{acc}}=\mathbb{I}[o=o^\star_i]$ with \emph{format-based} reward $R_{\text{fmt}}$ that enforces valid judging trajectories (i.e., addressing each rubric criterion with per-criterion explanations, followed by an overall justification and a final decision).
The final reward for the judge $\pi_j$ is $R_j = R_{\text{acc}} + R_{\text{fmt}}$.

\noindent \paragraph{(ii)  RL for Rubric Generator $\pi_r$ with the current $\pi_{j}$.}
With the judge parameters $\theta_j$ fixed, we update $\theta_r$ to prefer rubrics that lead the current judge to make correct decisions. Concretely, we maximize the  preference correctness under rubrics drawn from $\pi_r$ as:
\begin{equation}
\label{eq:rubric_objective}
\max_{\theta_r}\;
J_r(\theta_r;\theta_j)
={}\E_{(x,y^{(1)},y^{(2)},o^\star)\sim\mathcal{D}}
\E_{r\sim \pi_r(\cdot\mid x;\theta_r)}
\E_{(c, o)\sim \pi_j(\cdot\mid x,y^{(1)},y^{(2)},r;\theta_j)}
\big[\mathbb{I}[o=o^\star]\big].
\end{equation}
Intuitively, $\pi_r$ learns to generate criteria that are discriminative for the given prompt and usable by the judge to recover the dataset preference.  
In practice, we approximate the expectation with a single rollout by greedy decoding ($t=0$), i.e., we generate one judging trajectory $(c,o)$ per rubric and use the Monte Carlo estimate
\begin{equation}
\label{eq:rubric_reward_mc}
{R}_r = \mathbb{I}[o=o^\star].
\end{equation}
\textbf{Optimization (alternating RL).}
\modelname{} alternates between optimizing~Eq.~\ref{eq:judge_objective} and \ref{eq:rubric_objective}. At iteration $t$, we run:
\begin{align}
r_i^{t} &\sim \pi_r(\cdot\mid x_i;\theta_r^{t}) \quad \forall (x_i,y_i^{(1)},y_i^{(2)},o_i^\star)\in\mathcal{D}, \\
\theta_j^{t+1} &\leftarrow \mathrm{GRPO}\!\left(\theta_j^{t}\,;\,\{r_i^{t}\},\mathcal{D}\right), \\
\theta_r^{t+1} &\leftarrow \mathrm{GRPO}\!\left(\theta_r^{t}\,;\,\theta_j^{t+1},\mathcal{D}\right).
\end{align}
Here we cache one rubric per instance during judge updates (since $\pi_r$ is fixed in that phase). In each phase, GRPO~(\citet{shao2024deepseekmath}, details in Appendix \ref{app:grpo}) updates only the active policy while keeping the other fixed. Notably, we alternate training by updating the judge before the rubric generator in each cycle. In Sec.~\ref{sec:analysis}, we provide theoretical analysis proving the benefits of this ordering.

\paragraph{Connection to EM Algorithm.} Our alternating optimization can be viewed as a generalized EM procedure~\citep{dempster1977maximum} with rubrics $r$ as latent variables. For each preference instance $(x, y^{(1)}, y^{(2)}, o^{\star})$, the judge defines a conditional model $p_{\theta_j}(o^{\star} \mid x, y^{(1)}, y^{(2)}, r)$, while the rubric generator $\pi_r(r \mid x ; \theta_r)$ acts as an amortized variational distribution over the latent rubric~\citep{agrawal2021amortized}. 
With $\pi_r$ fixed, updating $\pi_j$ maximizes the expected correctness (or log-likelihood) under sampled rubrics, analogous to the M-step. With $\pi_j$ fixed, updating $\pi_r$ increases probability mass on rubrics that make the current judge more likely to recover $o^{\star}$, analogous to an amortized E-step. Because rubrics are high-dimensional discrete text sequences, we use stochastic policy-gradient updates rather than exact posterior inference, yielding a stochastic EM-style coordinate ascent scheme.

\subsection{Policy Model Post-training with \modelname{}}
\label{sec:rl}
We use the trained rubric generator $\pi_r(\cdot\mid q;\theta_r)$ and judge $\pi_j(\cdot\mid q,\cdot,\cdot,r;\theta_j)$ to provide preference supervision for post-training a policy model $\pi_\phi(a\mid q)$, where $q$ denotes the prompt and $a$ denotes a candidate response. For any pair of responses $(a,b)$, \modelname{} samples a rubric $r\sim \pi_r(\cdot\mid q;\theta_r)$ and predicts a preference label
\begin{equation}
\label{eq:judge_pref_fn_policy}
\hat{o} \;=\; \mathrm{Judge}_{\theta_j}(q,a,b,r)\in\{0,1\},
\end{equation}
where $\hat{o}=0$ indicates $a \succ b$ and $\hat{o}=1$ indicates $b \succ a$.

\textbf{Preference Optimization with \modelname{}.}
Given a prompt $q$, we sample two rollouts from the current policy,
\begin{equation}
\label{eq:two_rollouts_policy}
a_1, a_2 \sim \pi_\phi(\cdot\mid q),
\end{equation}
and use \modelname{} to label which one is preferred via Eq.~\eqref{eq:judge_pref_fn_policy} and retain examples where the predictions are consistent for both orders. 
We then update $\pi_\phi$ with the standard DPO objective~\citep{rafailov2023direct} relative to a fixed reference policy $\pi_{\mathrm{ref}}$. 
For iterative DPO \citep{xiong2024iterative,pang2024iterative}, 
we repeat (i) sampling rollouts, (ii) labeling them with \modelname{}, and (iii) applying DPO updates for multiple rounds.

\textbf{Online RL with \modelname{}.}
Following recent works on using pairwise judges to provide reward signals \citep{xu2025unified}, we also consider online RL where \modelname{} provides rewards for optimizing $\pi_\phi$. For each prompt $q$, we adopt the ReMax-style baseline construction~\citep{li2024remax} by first generating a deterministic reference response via greedy decoding,
\begin{equation}
\label{eq:greedy_ref_policy}
a^{(0)} \;=\; \mathrm{Greedy}(\pi_\phi(\cdot\mid q)) \quad (t=0),
\end{equation}
and then sample $K$ additional rollouts,
\begin{equation}
\label{eq:rl_rollouts_policy}
\{a^{(k)}\}_{k=1}^{K} \sim \pi_\phi(\cdot\mid q).
\end{equation}
To mitigate positional bias, we query the judge in both orders under the same rubric $r$.
Let $\hat{o}^{(k)}_{\rightarrow}\in\{0,1\}$ denote the judge outcome for $(q, a^{(k)},a^{(0)}, r)$ and
$\hat{o}^{(k)}_{\leftarrow}\in\{0,1\}$ for the swapped order $(q, a^{(0)},a^{(k)}, r)$. 

We define the final reward for response $a^{(k)}$ as
\begin{equation}
\label{eq:policy_reward_swap_binary2}
R_\phi(q,a^{(k)})=\frac{1}{2}\left(\mathbb{I}(\hat{o}^{(k)}_{\rightarrow}=0) + \mathbb{I}(\hat{o}^{(k)}_{\leftarrow}=1)\right).
\end{equation}

%% file: sections/theory.tex
\section{Theoretical Analysis}
\label{sec:analysis}

We analyze the gradient variance to justify our training schedule. We compare two phases: \textbf{Strategy A} (Judge Warmup), where we optimize the judge with pre-generated, reused rubrics; and \textbf{Strategy B} (Rubric Generator Training), where we optimize the rubric generator against a fixed judge.

\textbf{Setup.}
Let $u_r(r) := \frac{\partial}{\partial \theta_r} \log \pi_r(r \given x)$ and $u_j(o \given r) := \frac{\partial}{\partial \theta_j} \log \pi_j(o \given c,r)$ be the score functions.
Let $p(r) := \P(o=o^* \given c,r)$ be the judge's correctness probability given a rubric.
We define the gradient variance as $\mathrm{Var}(\hat g) := \E\norm{\hat g}^2 - \norm{\E[\hat g]}^2$.

\subsection{Variance Decomposition}

We first examine Strategy A. By freezing the rubric $\bar r$ (reuse) during judge updates, we eliminate inter-rubric variance.

\begin{proposition}[Judge Variance under Strategy A]
\label{prop:main-var-A}
Conditioned on a reused rubric $\bar r$, the variance of the judge's gradient estimator $\hat g_A$ is solely determined by the judge's binary classification uncertainty:
\begin{equation}
    \mathrm{Var}(\hat g_A \given \bar r) = p(\bar r)\qty(1-p(\bar r)) \norm{u_j(o^* \given \bar r)}^2.
\end{equation}
\end{proposition}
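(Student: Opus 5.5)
The plan is to write down the REINFORCE estimator explicitly and compute its second moment directly. Under Strategy A with the rubric frozen at $\bar r$, a single-sample judge gradient estimator is
\begin{equation*}
\hat g_A = R(o,o^*)\, u_j(o \given \bar r) = \mathbb{I}[o=o^*]\, u_j(o\given \bar r),
\end{equation*}
where $o$ is drawn from the judge conditioned on $\bar r$. Because the indicator vanishes unless $o=o^*$, this collapses to $\hat g_A = \mathbb{I}[o=o^*]\,u_j(o^*\given \bar r)$. The estimator is therefore a Bernoulli random variable times a deterministic vector.

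\textbf{Step 1.} Since the rubric is reused, it carries no randomness. The only randomness is the judge's sampled outcome, and $\mathbb{I}[o=o^*]\sim\mathrm{Bernoulli}(p(\bar r))$ by the definition of $p$. (I treat the reasoning chain $c$ as absorbed into the conditioning, following the Setup's notation $u_j(o\given r)$.)

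\textbf{Step 2.} Compute the first moment: $\E[\hat g_A\given \bar r] = p(\bar r)\,u_j(o^*\given\bar r)$.

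\textbf{Step 3.} Compute the second moment. Using $\mathbb{I}^2=\mathbb{I}$,
\begin{equation*}
\E\norm{\hat g_A}^2 = p(\bar r)\,\norm{u_j(o^*\given\bar r)}^2 .
\end{equation*}

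\textbf{Step 4.} Substitute into the paper's definition $\mathrm{Var}(\hat g)=\E\norm{\hat g}^2-\norm{\E\hat g}^2$:
\begin{equation*}
\mathrm{Var}(\hat g_A \given \bar r) = \bigl(p(\bar r)-p(\bar r)^2\bigr)\norm{u_j(o^*\given\bar r)}^2 = p(\bar r)\bigl(1-p(\bar r)\bigr)\norm{u_j(o^*\given\bar r)}^2 .
\end{equation*}
This is exactly the claimed formula.

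The main obstacle is the modeling convention, not the algebra. The identity holds exactly only under two conditions. First, the score function at the realized outcome must be the fixed vector $u_j(o^*\given\bar r)$, so the reasoning chain $c$ is treated as given, or marginalized in the notation. Second, $\hat g_A$ must be the raw indicator-weighted score, with no baseline and no $R_{\text{fmt}}$ term. I will state these conventions explicitly as the interpretation of the Setup.

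With a GRPO-style baseline, or with randomness in $c$, extra terms would appear. In that case I would either restrict the statement to the plain estimator or condition on $c$ as well. The "no inter-rubric variance" remark then follows immediately, because $\bar r$ is deterministic under reuse.
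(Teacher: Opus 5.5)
Your proposal is correct and matches the paper's proof essentially step for step: both observe that, with $\bar r$ fixed, $\hat g_A$ equals the fixed vector $u_j(o^*\given\bar r)$ with probability $p(\bar r)$ and $0$ otherwise, then compute the first moment $p(\bar r)\,u_j(o^*\given\bar r)$ and second moment $p(\bar r)\norm{u_j(o^*\given\bar r)}^2$ and subtract. Your explicit caveats (conditioning on $c$, and using the raw baseline-free estimator without $R_{\text{fmt}}$) are the same conventions the paper adopts implicitly.
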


\begin{proposition}[Generator Variance under Strategy B]
\label{prop:main-var-B}
The total variance of the generator's gradient estimator $\hat g_B$ decomposes into:
\begin{equation}
\mathrm{Var}(\hat g_B)
=
\underbrace{\E_r\qty[ p(r)(1-p(r)) \norm{u_r(r)}^2 ]}_{\text{(I) Multiplicative Reward Noise}}
+
\underbrace{\mathrm{Var}_r\qty( p(r) u_r(r) )}_{\text{(II) Cross-Rubric Inconsistency}}
\end{equation}
\end{proposition}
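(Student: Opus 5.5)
The decomposition follows from the law of total variance, conditioning on the sampled rubric $r$. I will take the estimator to be the single-sample REINFORCE estimator $\hat g_B = R\, u_r(r)$, where $r\sim\pi_r(\cdot\given x)$, the judge output is drawn from the frozen $\pi_j$, and $R=\mathbb{I}[o=o^*]$. Given $r$, the reward $R$ is Bernoulli with mean $p(r)$, and $u_r(r)$ is a deterministic vector. This mirrors the setup behind Proposition~\ref{prop:main-var-A}, except that the score function now belongs to the generator and the rubric is resampled rather than reused.

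\textbf{Step 1: conditional moments.} Conditioning on $r$, I compute
\[
\E[\hat g_B\given r]=p(r)\,u_r(r)
\quad\text{and}\quad
\E[\norm{\hat g_B}^2\given r]=\E[R^2\given r]\norm{u_r(r)}^2=p(r)\norm{u_r(r)}^2,
\]
using $R^2=R$ because $R$ is an indicator. It follows that
\[
\mathrm{Var}(\hat g_B\given r)=p(r)\norm{u_r(r)}^2-p(r)^2\norm{u_r(r)}^2=p(r)(1-p(r))\norm{u_r(r)}^2 .
\]
This is the same computation that gives Proposition~\ref{prop:main-var-A}, with $u_j$ replaced by $u_r$.

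\textbf{Step 2: total variance.} Since the paper defines variance as $\E\norm{\cdot}^2-\norm{\E\cdot}^2$, I will write
\[
\E\norm{\hat g_B}^2=\E_r\bigl[\E[\norm{\hat g_B}^2\given r]\bigr]
=\E_r\bigl[\mathrm{Var}(\hat g_B\given r)\bigr]+\E_r\norm{\E[\hat g_B\given r]}^2 .
\]
Subtracting $\norm{\E_r[p(r)u_r(r)]}^2$ from both sides gives
\[
\mathrm{Var}(\hat g_B)=\E_r\bigl[\mathrm{Var}(\hat g_B\given r)\bigr]+\mathrm{Var}_r\bigl(p(r)u_r(r)\bigr).
\]
Substituting Step 1 into the first term yields term (I), and the second term is exactly (II).

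\textbf{Main obstacle.} The difficulty is not analytic but a matter of pinning down the estimator. The statement does not say whether the judge's reasoning chain $c$ is sampled or decoded greedily, and whether a baseline is subtracted (GRPO uses a group-normalized one). I will fix the plain REINFORCE form above and treat any randomness in $c$ as absorbed into $p(r)$, which is then the marginal correctness probability given $r$. Under that convention the Bernoulli argument holds unchanged. I would add a remark that with a baseline $b(x)$ the same decomposition holds with $p(r)$ replaced by $p(r)-b$ in (II), and with $\norm{u_r(r)}^2$ unchanged in (I).
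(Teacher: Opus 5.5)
Your proof is correct and follows essentially the same route as the paper: condition on $r$, use that $R\given r$ is Bernoulli with mean $p(r)$ to get the conditional variance $p(r)(1-p(r))\norm{u_r(r)}^2$ and conditional mean $p(r)u_r(r)$, then apply the vector law of total variance, which you re-derive inline from second moments instead of citing it as a lemma. Your remark that $p(r)$ must be read as the marginal correctness probability given $r$ (absorbing any randomness in $c$) is the same convention the paper's proof uses implicitly when it treats $R\given r$ as $\mathrm{Bernoulli}(p(r))$.
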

\vspace{-0.5em}
\textbf{Interpretation.}
Term (I) represents the judge's Aleatoric uncertainty amplified by the high-dimensional generator gradient $\norm{u_r}^2$.
Term (II) captures the optimization difficulty when different rubrics yield different expected rewards $p(r)$, causing the gradient direction to oscillate.

\subsection{Variance Domination in Early Training}

We now derive the variance gap. Instead of assuming trivial gradient dominance, we postulate a condition linking the generator's exploration intensity to its gradient magnitude.

\begin{assumption}[Exploration-Gradient Sufficiency]
\label{asm:main-early}
We assume that during early training, the generator's gradient norm is sufficient relative to the judge's, satisfying the following exploration-dependent lower bound:
\begin{equation}
    \frac{\norm{u_r}}{\norm{u_j}} > \sqrt{\frac{1-p(r)}{1-p(r)+C_1 p(r)}},
\end{equation}
where $p$ represents the judge's correctness probability (analyzed pointwise or in expectation), and $C_1 \in (0, 1)$ is defined as: $C_1 := \mathrm{Var}_r( p(r) u_r(r) ) / \E_r[ p(r)^2 \norm{u_r(r)}^2 ]$.
\end{assumption}

\begin{remark}
The condition in Assumption~\ref{asm:main-early} is mild and physically justified.
Active exploration ($C_1>0$) introduces a positive buffer, making the required gradient-norm ratio on the RHS strictly less than 1 and thus avoiding the need for the generator’s gradient to strictly dominate. 
Moreover, the judge and generator both produce comparable-length sequences over the same vocabulary (checks/prediction vs. rubrics), so their gradient norms are typically of the same order; the exploration buffer is enough to absorb small mismatches and satisfy the condition in practice.
\end{remark}

\begin{theorem}[Strict Variance Domination]
\label{thm:main-gap}
Under Assumption~\ref{asm:main-early}, the gradient variance of Strategy B strictly dominates the expected conditional variance of Strategy A:
\begin{equation}
    \mathrm{Var}(\hat g_B) > \E_{\bar r}[\mathrm{Var}(\hat g_A \given \bar r)].
\end{equation}
This inequality establishes that the \textbf{structural instability} driven by exploration (quantified by $C_1$) is the governing factor in the variance landscape, overriding differences in gradient magnitudes.
\end{theorem}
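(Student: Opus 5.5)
The plan is to combine Proposition~\ref{prop:main-var-A} and Proposition~\ref{prop:main-var-B} and rewrite Term (II) in terms of Term-(I)-like quantities using the definition of $C_1$. The comparison will be made pointwise in $r$, treating $p=p(r)$ and the norms as in Assumption~\ref{asm:main-early}, and then integrated.

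First I take the expectation of Proposition~\ref{prop:main-var-A} over the reused rubric $\bar r\sim\pi_r$:
\begin{equation}
\E_{\bar r}[\mathrm{Var}(\hat g_A\given\bar r)]=\E_r\qty[p(r)(1-p(r))\norm{u_j}^2].
\end{equation}
Next I use the definition of $C_1$ to write Term (II) exactly as $\mathrm{Var}_r(p u_r)=C_1\,\E_r[p^2\norm{u_r}^2]$. Proposition~\ref{prop:main-var-B} then becomes
\begin{equation}
\mathrm{Var}(\hat g_B)=\E_r\qty[\norm{u_r}^2\qty(p(1-p)+C_1p^2)]=\E_r\qty[p\norm{u_r}^2\qty(1-p+C_1p)].
\end{equation}
It therefore suffices to show the pointwise inequality
\begin{equation}
p\norm{u_r}^2(1-p+C_1p)>p(1-p)\norm{u_j}^2 .
\end{equation}
For $p>0$ this is equivalent to $\norm{u_r}^2/\norm{u_j}^2>(1-p)/(1-p+C_1p)$. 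That is exactly the square of Assumption~\ref{asm:main-early}, and squaring is legitimate because both sides are nonnegative and $1-p+C_1p>0$ for $C_1\in(0,1)$.

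Taking expectations over $r$ then preserves strictness, provided the inequality is strict on a set of positive $\pi_r$-measure. This is where I need $p(r)>0$ on a set of positive measure. If $p\equiv0$, both sides vanish and the strict inequality fails, so I would state this as a nondegeneracy condition implicit in $C_1>0$: $C_1$ is defined with $\E[p^2\norm{u_r}^2]$ in the denominator, which must be positive.

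The main obstacle is the quantifier mismatch. $C_1$ is a global constant, defined via expectations over $r$, whereas the assumption is pointwise in $p(r)$ and compares $\norm{u_r}$ with $\norm{u_j}$, which also depend on $r$ (and on $o$ through $u_j(o^*\given r)$). Multiplying the pointwise assumption by $p(1-p+C_1p)\norm{u_j}^2\ge0$ and integrating is clean, because $C_1$ enters only as a constant factor multiplying $\E[p^2\norm{u_r}^2]$. So I will do the comparison pointwise first and integrate second.

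If the assumption is instead read "in expectation," I would follow the same steps with $p$ replaced by a representative value. In that reading, the argument needs the inequality $\E[p\norm{u_r}^2(1-p+C_1p)]>\E[p(1-p)\norm{u_j}^2]$ directly, which I would simply take as the intended meaning of the assumption.
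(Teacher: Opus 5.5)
Your proposal is correct and follows essentially the same route as the paper. You substitute Propositions~\ref{prop:main-var-A} and~\ref{prop:main-var-B}, rewrite Term (II) as $C_1\,\E_r[p(r)^2\norm{u_r(r)}^2]$ using the definition of $C_1$, factor the integrand as $p(r)\big[\norm{u_r(r)}^2(1-p(r)+C_1p(r)) - \norm{u_j(o^\star\given r)}^2(1-p(r))\big]$, and apply the squared Assumption~\ref{asm:main-early} pointwise before taking the expectation; your observation that strictness after integration needs $p(r)>0$ on a set of positive $\pi_r$-measure (which follows from positivity of the denominator of $C_1$) is a small refinement of the paper, which simply asserts $p(r)\in(0,1)$ for every $r$.
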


\begin{remark}[Implication for Training Stability]
The variance gap derived in Theorem~\ref{thm:main-gap} justifies the proposed training schedule~(We first train the judge, then train the rubric generator, and subsequently perform alternating training following this sequence.) by highlighting a critical trade-off in Signal-to-Noise Ratio (SNR). The strictly higher variance in Strategy B implies that generator updates are dominated by exploration stochasticity rather than the true gradient direction, risking optimization instability. In contrast, Strategy A acts as a \textit{variance reduction} mechanism: by fixing the rubric, it effectively sets the exploration coefficient $C_1 \to 0$ locally, isolating the judge from structural noise and providing a stable target for effective learning.
\end{remark}

%% file: sections/experiments.tex
\section{Experiment}
\subsection{Datasets and Experiment Settings}

\noindent \textbf{Training data.}
We train the two components of {\name}, the \emph{rubric generator} and the \emph{judge}, on the general-domain portions of \textsc{OpenRubrics}~\citep{liu2025openrubrics}.
The dataset is split equally into non-overlapping parts, and each rubric-judge alternating round is run on a single part. 
During training judge, we randomly shuffle the order of response candidates to be evaluated; as shown in App.~\ref{app:position_bias}, this practice greatly helps reduce position bias in reward modeling.

\noindent \textbf{Backbone and variants.}
Both the rubric generator and the judge are fine-tuned from Qwen-3-8B~\citep{qwen3technicalreport}.
At inference time, {\name} follows the two-stage rubric-judging process, as detailed in Sec.~\ref{subsec:overview}.
We also report ensemble results \textit{voting@5}, by aggregating five independent judges via majority voting.

\noindent \textbf{Baselines.}
For reward-model evaluation, we follow \citet{liu2025openrubrics} and compare \name{} against strong same-scale white-box judges, including JudgeLRM~\citep{chen2025judgelrm}, RRM~\citep{guo2025reward}, RM-R1~\citep{chen2025rmr1}, and \textsc{Rubric-RM}~\citep{liu2025openrubrics} (SFT-only rubric generator + judge). We also report judges using black-box APIs when available. To isolate the benefit of rubric-aware training, we include a training-free baseline, \textsc{Qwen-3-8B (Rubric+Judge)}~\citep{yang2025qwen3}, which directly generates rubrics and judgments via prompting. 
For policy training, we use \name{} as the reward model to fine-tune Qwen2.5-7B-Instruct~\citep{qwen2025qwen25technicalreport} and compare against Skywork~\citep{liu2024skywork}, ArmoRM~\citep{wang2024interpretable}, UltraFeedback~\citep{cui2024ultrafeedback}, RLCF/AI Judge~\citep{viswanathan2025checklists}, OnlineRubrics~\citep{rezaei2025online}, and \textsc{Rubric-RM}~\citep{liu2025openrubrics}.

\noindent \textbf{Evaluation benchmarks and metrics.}
{
We evaluate \name{} as a pairwise reward model on widely used alignment benchmarks: RewardBench (Chat/Chat-Hard)~\citep{rewardbench}, RM-Bench~\citep{liu2025rmbench}, PPE-IFEval~\citep{ppe}, FollowBench~\citep{followbench}, InfoBench~\citep{infobench}, IFBench~\citep{ifbench}, RewardBench2 (Precise-IF/Focus)~\citep{malik2025rewardbench2}, Arena-Hard~\citep{chiang2024chatbotarena}, AlpacaEval 2~\citep{dubois2025length}, Creative Writing Benchmark v3~\citep{creativewritingv3}, WildBench~\citep{lin2024wildbench}, and WritingPreferenceBench~\citep{ying2025beyond}. 
For FollowBench and InfoBench, we convert the original single-response setup to pairwise evaluation by sampling two responses from the same model (Qwen-3-8B/14B) and using the benchmark's verifier to identify constraint violations. 
We follow each benchmark’s official splits and scoring rules, reporting accuracy, win-rate, or the benchmark-specific metric.

\begin{table*}[t!]
\centering
\renewcommand\arraystretch{0.95}
\caption{Comparison of different judge and reward models across multiple benchmarks. 
RewardBench2 reports results on Precise IF, and Focus dimensions. 
Rubric API uses GPT-4.1-Mini, and Judge API uses Gemini-2.5-Flash-Lite. 
Best results are highlighted in \textbf{bold}.
}
\label{tab:main_results}
\resizebox{0.99\textwidth}{!}{%
\begin{tabular}{l cc cccc c cc cc}
\toprule
\multirow{2.5}{*}{} 
& \multicolumn{2}{c}{\bf RewardBench} 
& \multicolumn{4}{c}{\bf IF Evaluation Benchmarks} 
& \bf RM-Bench
& \multicolumn{2}{c}{\bf RewardBench2}
& \multirow{2.5}{*}{\bf HelpSteer3}
& \multirow{2.5}{*}{\bf Avg.} \\ 
\cmidrule(lr){2-3}  \cmidrule(lr){4-7} \cmidrule(lr){8-8} \cmidrule(lr){9-10}
                            & Chat & Chat Hard  & FollowBench   &   PPE-IFEval  &   InfoBench   &   IFBench     &    Chat        & Precise IF    &   Focus  &      &   \\ 
\midrule
\multicolumn{11}{l}{\it Black-box LLMs (For reference only)} \\
\midrule
Claude-3.5-Sonnet           & 96.4 & 74.0       & --            & 58.0          & --            & --            & 62.5           & 38.8          & 87.0     & --   & - \\
Gemini-2.5-Flash & 95.0 & 83.3 & 86.0 & 75.0 & 85.6 & 69.3 & 78.5 & 57.5 & 84.1 & 70.6 & 78.5  \\
API (Rubric+Judge)          & 79.6 & 79.2       & 83.2          & 61.0          & 82.2          & 66.2          & 67.9           & 42.5          & 79.6	    & 71.4 & 71.3 \\
API (direct Judge)          & 89.6 & 71.2       & 81.7          & 59.2          & 72.9          & 60.4          & 67.2           & 13.2          & 63.4     & 70.3 & 64.9 \\ 
\midrule
\multicolumn{11}{l}{\it Larger White-box LLMs (For reference only)} \\
\midrule
RM-R1-14B (Qwen-2.5-Inst)   & 73.5 & 79.8	    & 84.0	        & 59.0          & 85.5	        & 60.8          & 73.2           & 23.8	         & 84.6	    & 74.8 & 69.9 \\
RM-R1-14B (DeepSeek-Dist)   & 90.3 & 78.9       & 89.9          & 61.2          & 82.4          & 59.0          & 71.4           & 30.6          & 79.0     & 74.6 & 71.7 \\
RM-R1-32B (Qwen-2.5-Inst)   & 95.3 & 80.3 & 84.9 & 60.4 & 86.1 & 60.4 & 75.3 & 33.1 & 84.2 & 72.9 & 73.3\\ 
RM-R1-32B (DeepSeek-Dist)   & 95.3 & 83.1 & 89.2 & 63.2 & 85.0 & 58.6 & 74.2 & 36.9 & 79.2 & 75.6 & 74.0\\
RRM-32B                     & 94.7 & 81.1 & 85.7 & 60.2 & 84.4 & 60.8 & 73.9 & 34.4 & 83.6 & 75.4 & 73.4\\

\midrule
\multicolumn{11}{l}{\it White-box Judge/Reward LLMs} \\
\midrule
RM-R1-7B (Qwen-2.5-Inst)    & 83.0 & 70.0	    & 56.3          & 55.2	        & 71.3	        & 55.2          &  64.2       & 20.6	         & 76.2 	& 65.2 & 61.7 \\
RM-R1-7B  (DeepSeek-Dist)   & 85.3 & 67.3       & 69.7          & 51.0          & 70.3          & 56.5          & 62.2           & 13.8          & 55.4     & 62.6 & 59.4 \\
RRM-7B	                    & 77.7 & 69.5       & 65.5	        & 51.0	        & 68.2	        & 53.2	        & 59.9           & 10.0	         & 60.4	    & 62.4 & 57.8 \\
JudgeLRM-7B                 & \bf 92.1 & 56.1   &  79.8      & 46.0          & 62.7          & 47.5          & 55.4           & 9.4           & 29.1     & 60.2 & 53.8 \\
\midrule
\multicolumn{11}{l}{\it Rubric-based Methods} \\
\midrule
Qwen-3-8B (Rubric+Judge)    & 73.9 & 63.6	    & 63.0          & 53.8          & 74.6	        & 55.6	        &  64.2       & 21.9	         & 56.6     & 61.8 & 58.9 \\
{\textsc{Rubric-RM}}                  & 88.2 & 74.1       & 76.1          & 67.0          & 80.8          & 65.4          & 65.7           & 34.4          & 82.2     & 67.0 & 70.1 \\
{\textsc{Rubric-RM}}-voting@5         & 89.9 &  75.4   & 81.5          &  70.8      &  83.8      &  \bf 67.1      & 67.0           &  40.0      &  86.5 &  67.5 & 73.0\\
\rowcolor{purple!10}
{\name}                  & 89.4 & 79.6       & 85.7          & 70.8          & 86.1          & 65.9          &\bf 69.2           & 41.9          & 89.4     & 69.8 & 74.8 \\
\rowcolor{purple!10}
{\name}-voting@5         & 90.3 &  \bf 80.7   & \bf87.4          & \bf \bf72.0      & \bf \bf87.7      & \bf \bf67.1      & 69.1           & \bf 46.2      & \bf 90.3 &\bf  71.1 & \bf 76.2\\ 
\bottomrule
\end{tabular}
}
\end{table*}

\subsection{Performance of {\name}}
Table~\ref{tab:main_results} compares \name{} against a broad set of judge/reward models. \name{} achieves the best average performance among all white-box methods, improving \textsc{Rubric-RM} from $70.1$ to $74.8$, and reaching $76.2$ with voting@5. 
These gains are consistent across both instruction-following and preference-style benchmarks, supporting our key contribution: \name{} learns \emph{more discriminative rubrics} and a \emph{more reliable rubric-conditioned judge} through RL. 
Notably, \name{} also substantially outperforms API-based judges (e.g., 76.2 vs. 71.3 for Rubric+Judge API and 64.9 for direct Judge API), indicating that explicit rubric-conditioned learning yields a stronger and more stable evaluation signal than black-box judging.

\begin{figure}[t!]
  \centering
  \includegraphics[width=0.85\linewidth]{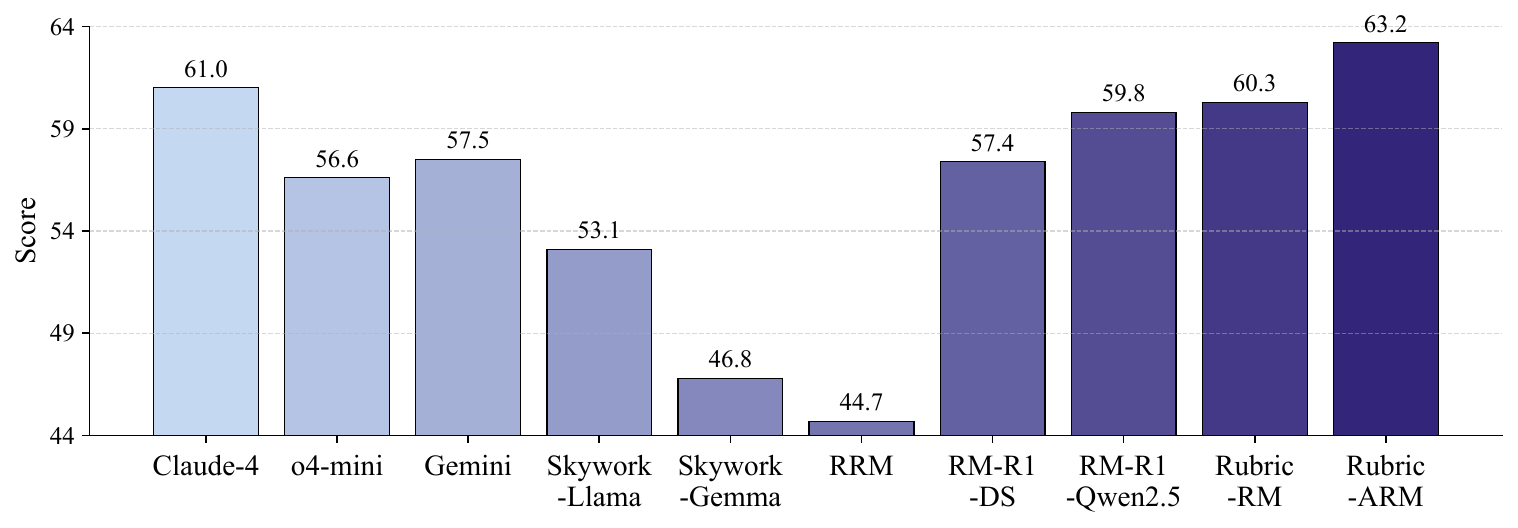}
  \caption{\centering Performance of different judge and reward models on WritingPreferenceBench. }
  \label{fig:wpb}
\end{figure}
We further assess generalization on \textsc{WritingPreferenceBench}~\citep{ying2025beyond}, shown in Fig.~\ref{fig:wpb}~(detail results are shown in Table~\ref{tab:wpb}), which serves as an out-of-distribution benchmark since none of the compared reward/judge models are trained on this domain. Despite this distribution shift, {\name} remains strong and achieves the best overall score among all methods ($63.2$), outperforming \textsc{Rubric-RM} ($60.3$) and strong reasoning reward models such as RM-R1-Qwen2.5-7B ($59.8$). The improvements are broad across diverse writing genres (e.g., Functional, Promotional, Non-Fiction, and Poetry), suggesting that {\name} learns rubrics that capture transferable criteria beyond the training domains, thereby providing a robust reward signal with improved OOD generalization.

\begin{table*}[!t]
\centering
\renewcommand\arraystretch{0.95}
\caption{Ablation study about the effectiveness of the format reward and the order of judge optimization and rubric generator.
Best results are highlighted in \textbf{bold}. \vspace{-1ex}
}
\label{tab:ablation}
\resizebox{0.99\textwidth}{!}{%
\begin{tabular}{l cc cccc c cc cc}
\toprule
\multirow{2.5}{*}{} 
& \multicolumn{2}{c}{\bf RewardBench} 
& \multicolumn{4}{c}{\bf IF Evaluation Benchmarks} 
& \bf RM-Bench
& \multicolumn{2}{c}{\bf RewardBench2}
& \multirow{2.5}{*}{\bf HelpSteer3}
& \multirow{2.5}{*}{\bf Avg.} \\ 
\cmidrule(lr){2-3}  \cmidrule(lr){4-7} \cmidrule(lr){8-8} \cmidrule(lr){9-10}
                            & Chat & Chat Hard  & FollowBench   &   PPE-IFEval  &   InfoBench   &   IFBench     &    Chat        & Precise IF    &   Focus  &      &   \\ 
\midrule
{{\name} switch opt}                  &  93.2 & 76.3       & 85.9          & 67.3          & 84.1          & 64.6          & 69.5           & 24.4          & 86.1     & 71.8 & 72.4 \\
{{\name} switch opt}-voting@5         & \bf 94.0 &  76.5   &\bf 89.1          &  67.8      &  85.0      &  64.6      & \bf 69.8           &  39.4      &  90.1 &  \bf 72.4 & 74.9\\
{{\name} w/o format}                  &  89.8 & 78.7       & 87.1          & 69.2          & 86.1          & 64.3          & 69.5           & 25.6          & 84.8     & 70.8 & 72.6 \\
{{\name} w/o format}-voting@5         & 91.5	&78.5	&88.2	&70.2	&87.7	&65.1	&69.7	&43.8	&88.9	&71.1	&75.5\\
\rowcolor{purple!10}
{\name}                  & 89.4 & 79.6       & 85.7          & 70.8          & 86.1          & 65.9          & 69.2           & 41.9          & 89.4     & 69.8 & 74.8 \\
\rowcolor{purple!10}
{\name}-voting@5         & 90.3 &  \bf 80.7   & \bf87.4          & \bf \bf72.0      & \bf \bf87.7      & \bf \bf67.1      & 69.1           & \bf 46.2      & \bf 90.3 &  71.1 & \bf 76.2\\ 
\bottomrule
\end{tabular}
}
\end{table*}

\subsection{Ablation Study}

Table~\ref{tab:ablation} reports two ablation studies that examine (i) the optimization order between the judge and the rubric generator, and (ii) the contribution of the format reward. Unless stated otherwise, all settings are kept identical to {\name}.

\begin{table}[!t]
\centering
\renewcommand\arraystretch{0.95}
\caption{
Comparison of trained policy models with different reward models on a format-based constrained instruction-following benchmark (IFEval) and an open-ended benchmark (InfoBench). 
Baseline results with "${\star}$" are from \citet{viswanathan2025checklists,liu2025openrubrics}. 
Results with \underline{underlines} are reproduced by us using official checkpoints and evaluation scripts. 
Best scores are in \textbf{bold}.
}
\label{tab:ifeval}
\resizebox{0.8\columnwidth}{!}{%
\arrayrulecolor{black!50}
\begin{tabular}{l|cc|cc|c|c}
\toprule
\multirow{2}{*}{\textbf{Model}} 
& \multicolumn{2}{c|}{\textbf{IFEval (Prompt)}} 
& \multicolumn{2}{c|}{\textbf{IFEval (Inst.)}} 
& {\textbf{IFEval}} 
& {\textbf{InfoBench}} \\ 
\cmidrule(lr){2-3}  \cmidrule(lr){4-5} \cmidrule(lr){6-6} \cmidrule(lr){7-7}
& \textbf{Loose} & \textbf{Strict} & \textbf{Loose} & \textbf{Strict} & \textbf{AVG} &  \textbf{AVG} \\ 
\midrule
GPT-4 (0314)${\star}$                              & 79.3 & 76.9 & 85.4 & 83.6 & 81.3 & 87.3 \\ 
AutoIF~\citep{dong2025selfplay}               & 56.9 & 47.1 & 67.0 & 57.6 & 57.2 & 80.6 \\
UltraIF~\citep{an2025ultraif}             & 75.4 & 71.3 & 83.0 & 79.4 & 77.3 & 80.7 \\
RAIF~\citep{qin2025incentivizing}  &-- & -- & -- & -- & 70.1 & 82.7 \\ 
\midrule
Qwen2.5-7B-Instruct${\star}$          & 75.0 & 72.5 & 81.8 & 79.9 & 77.3 & 78.1 (\underline{76.0}) \\
+ SFT (Distilled)${\star}$            & 66.8 & 64.1 & 75.3 & 72.8 & 69.8 & 72.5 \\
+ DPO (via Skywork)${\star}$          & 75.7 & 68.0 & 83.2 & 78.5 & 76.0 & 82.0 \\
+ DPO (via ArmoRM)${\star}$           & 73.8 & 70.2 & 81.7 & 78.3 & 76.0 & 83.5 \\
+ DPO (via Ultrafbk.)${\star}$        & 71.5 & 69.1 & 79.9 & 77.7 & 74.6 & 80.0 \\
+ DPO (via AI Judge)${\star}$         & 73.0 & 68.9 & 80.9 & 77.8 & 75.2 & 76.1 \\
+ DPO (via RLCF)${\star}$             & 77.3 & 72.6 & 84.1 & 80.3 & 78.6 & 84.1 (\underline{81.5}) \\
+ IterDPO (via RLCF)            & 78.2& 74.3& 84.5& 81.1&79.5 &81.8   \\
+ DPO (via \textsc{Rubric-RM})${\star}$            & 78.2 & 73.9 & 84.5 & 81.2 & 79.5 & 83.0 \\ 
+ IterDPO (via \textsc{Rubric-RM})            &77.6&74.1&84.3&81.7&79.4&83.3  \\
\midrule
\rowcolor{purple!10}
+ DPO (via \name)            & 78.7 &  76.0 & 84.7 & 82.5 & 80.4 & 83.7 \\ 
\rowcolor{purple!10}
+ IterDPO (via \name)            & \bf 79.3 & 75.1 & \bf 86.0 & \bf 82.9 &  80.8 & \bf 85.0 \\ 
\bottomrule
\end{tabular}%
}
\end{table}

\textbf{Optimization order.}
Our default schedule updates the judge first, then the rubric generator, and alternates thereafter. Swapping this order (\texttt{switch opt}) consistently hurts performance: the average drops from $74.8\!\rightarrow\!72.4$ ($-2.4$) without voting and from $76.2\!\rightarrow\!74.9$ ($-1.3$) with voting@5, with especially large regressions on strict instruction-following metrics (e.g., RewardBench2-Precise IF: $41.9\!\rightarrow\!24.4$). This suggests that a stronger judge provides a less noisy learning signal for rubric optimization.

\textbf{Format reward.}
Removing the format reward (\texttt{w/o format}) also degrades results: $74.8\!\rightarrow\!72.6$ ($-2.2$) without voting and $76.2\!\rightarrow\!75.5$ ($-0.7$) with voting@5. The largest gains appear on structure-sensitive metrics (e.g., RewardBench2-Precise IF: $+16.3$), indicating that $R_{\text{fmt}}$ helps prevent degenerate judging behaviors (e.g., missing criteria checks) and improves rubric adherence.

\subsection{Performance of offline RL-based Policy Models}
We evaluate whether the benefit of {\name} transfers to downstream \emph{offline} policy learning.

\textbf{Instruction-Following Evaluation.}
Table~\ref{tab:ifeval} and Fig.~\ref{fig:ifbench} show that policies optimized with {\name}-trained rewards consistently achieve the strongest instruction-following performance. On IFEval, DPO with {\name} improves the overall average to $80.4$, and iterative DPO further raises it to $80.8$ (best), with particularly strong gains on instruction-level constraints. The advantage also transfers to the open-ended InfoBench benchmark, where {\name} reaches $83.7$ with DPO and $85.0$ with iterative DPO (best). Compared to iterative baselines, {\name} remains consistently stronger: on IFBench (Fig.~\ref{fig:ifbench}), \textsc{RLCF} improves from $28.2$ to $32.0$ with IterDPO, while {\name} achieves $35.4$ with IterDPO; similarly, iterative \textsc{Rubric-RM} reaches $33.7$, still below {\name}. Overall, these results indicate that {\name} provides a more precise reward signal, and that iterative optimization amplifies the gains over both one-shot DPO and iterative baselines.

\begin{table}[t]
\centering
\renewcommand\arraystretch{0.95}
\caption{
Comparison of different strategies applied to Qwen2.5-7B-Instruct on \textbf{Arena-Hard} and \textbf{AlpacaEval}. 
Results are reported for vanilla models and style/length-controlled settings. 
Baseline results with "${\star}$" are from \citet{viswanathan2025checklists,rezaei2025online,liu2025openrubrics}. 
Best results are in \textbf{bold}. 
}
\label{tab:alpaca}
\resizebox{0.8\textwidth}{!}{%
\arrayrulecolor{black!50}
\begin{tabular}{l|cc|cc|c}
\toprule
\multirow{2}{*}{\textbf{Model}} 
& \multicolumn{2}{c|}{\textbf{Arena-Hard}} 
& \multicolumn{2}{c|}{\textbf{AlpacaEval}} 
& \multirow{2}{*}{\textbf{AVG}} \\ 
\cmidrule(lr){2-3} \cmidrule(lr){4-5}
& \textbf{Vanilla} & \textbf{Style-Con} 
& \textbf{Vanilla} & \textbf{Length-Con} &  \\ 
\midrule
GPT-4 (0314)${\star}$                 & 50.0 & 50.0 & 22.1 & 35.3 & 39.4 \\ 
UltraIF~\citep{an2025ultraif}   & 31.4 & -- & -- & -- & -- \\
\midrule
Qwen2.5-7B-Instruct${\star}$         & 51.3 & 42.8 & 33.5 & 36.2 & 41.0 \\
+ SFT (Distilled)${\star}$           & 32.6 & 29.2 & 36.1 & 33.3 & 32.8 \\
+ DPO (via Skywork)${\star}$         & 55.1 & 50.3 & 44.8 & 41.5 & 47.9 \\
+ DPO (via ArmoRM)${\star}$           & 50.8 & 46.4 & 37.6 & 38.1 & 43.2 \\
+ DPO (via Ultrafbk.)${\star}$        & 52.8 & 47.9 & 33.7 & 38.7 & 43.3 \\
+ DPO (via AI Judge)${\star}$         & 51.0 & 44.4 & 28.8 & 33.4 & 39.4 \\
+ DPO (via RLCF)${\star}$             & 54.6 & 48.4 & 36.2 & 37.1 & 44.1 \\ 
+ IterDPO (via RLCF)            & 51.1&54.6&38.9&39.2&46.0 \\
+ DPO (via \textsc{Rubric-RM})${\star}$            & 52.9 & 53.1 & 47.0 & 41.3 & 48.6 \\
+ IterDPO (via \textsc{Rubric-RM})            &56.3&56.7&50.1&42.0&51.3  \\
+ RL (via \textsc{OnlineRubrics})${\star}$            & 56.5 & -- &\bf  55.0 & 30.4& -- \\
\midrule
\rowcolor{purple!10}
+ DPO (via \name)            & 57.8 & \bf 59.5 & 47.1 & 42.5 & 51.7 \\ 
\rowcolor{purple!10}
+ IterDPO (via \name)            & \bf 58.8 &  58.9 & 52.0 & \bf 44.0 & \bf 53.4 \\
\bottomrule
\end{tabular}%
}
\end{table}

\begin{table}[t]
\centering
\renewcommand\arraystretch{0.95}
\caption{
Comparison of different alignment strategies applied to Qwen2.5-7B-Instruct on \textbf{WildBench}. 
Results are reported for task-specific scores and task macro WB score. 
Baseline results with "${\star}$" are from \citet{wang2025drift}. 
Best results are in \textbf{bold}.
}
\label{tab:wildbench}
\resizebox{0.92\columnwidth}{!}{%
\begin{tabular}{@{}l|cccccc@{}}
\toprule
\textbf{Method} & \textbf{Creative} & \textbf{Planning} & \textbf{Math} & \textbf{Info seeking} & \textbf{Coding} & \textbf{WB Score} \\ \midrule
Claude-3.5-Sonnet (20240620)$^{\star}$                       & 55.6     & 55.6     & 50.2 & 55.5         & 56.5   & 54.7     \\
GPT-4-turbo (20240409)$^{\star}$                             & 58.7     & 56.2     & 51.0 & 57.2         & 55.1   & 55.2     \\
GPT-4o-mini (20240718)$^{\star}$                           & 60.1     & 58.2     & 54.0 & 57.4         & 57.2   & 57.1     \\ \midrule
Qwen2.5-7B-Instruct$^{\star}$                     & 50.1     & 51.8     & 47.1 & 50.7         & 45.0   & 48.7     \\
+DRIFT$^{\star}$                                  & 52.5     & 53.2     & 50.6 & 52.4         & 50.3   & 51.7    \\
+SPIN$^{\star}$ & 43.3 & 45.5 & 41.6 & 46.3 & 39.1 & 42.9 \\
+IterDPO$^{\star}$ (via OpenAssistant) & 46.8 & 48.6 & 44.5 & 48.0 & 44.3 & 46.3 \\
+DPO (via RLCF)                         & 51.4     & 52.7     & 49.0 & 51.3         & 48.8   & 50.5     \\
+IterDPO (via RLCF)            & 51.9&52.6&47.8&51.4&46.5&49.7 \\
+DPO (via \textsc{Rubric-RM})                    & 54.8        & 55.5        & 51.5    & 54.1            & 52.9      & 53.6       \\ 
+IterDPO (via\textsc{Rubric-RM})            &57.0&56.2&50.6&54.9&52.8&54.0  \\
\midrule
\rowcolor{purple!10}
+DPO (via {\name})     & 55.2        & 55.6        & 49.5    & 56.0            & 53.1      & 53.7        \\
\rowcolor{purple!10}
+IterDPO (via {\name}) & \bf 57.3     & \bf 57.2     & \bf 53.3 & \bf 56.2         & \bf 55.2   & \bf 55.7     \\ 
\bottomrule
\end{tabular}%
}
\end{table} 
\textbf{Human Preference Alignment Evaluation.}
Table~\ref{tab:alpaca} and Table~\ref{tab:wildbench} show that {\name}-trained rewards consistently yield stronger preference alignment across both controlled and open-domain evaluations. On Arena-Hard and AlpacaEval (Table~\ref{tab:alpaca}), DPO with {\name} achieves the best overall average ($51.7$), and IterDPO further improves it to $53.4$ (best). On WildBench (Table~\ref{tab:wildbench}), {\name} again yields the strongest macro score: DPO via {\name} reaches $53.7$, while IterDPO via {\name} achieves $55.7$ (best), improving over IterDPO with \textsc{Rubric-RM} ($54.0$) by 1.7\%, indicating improved preference-aligned helpfulness on broad, real-world tasks.

\begin{figure}[t!]
  \centering
  \includegraphics[width=0.9\linewidth]{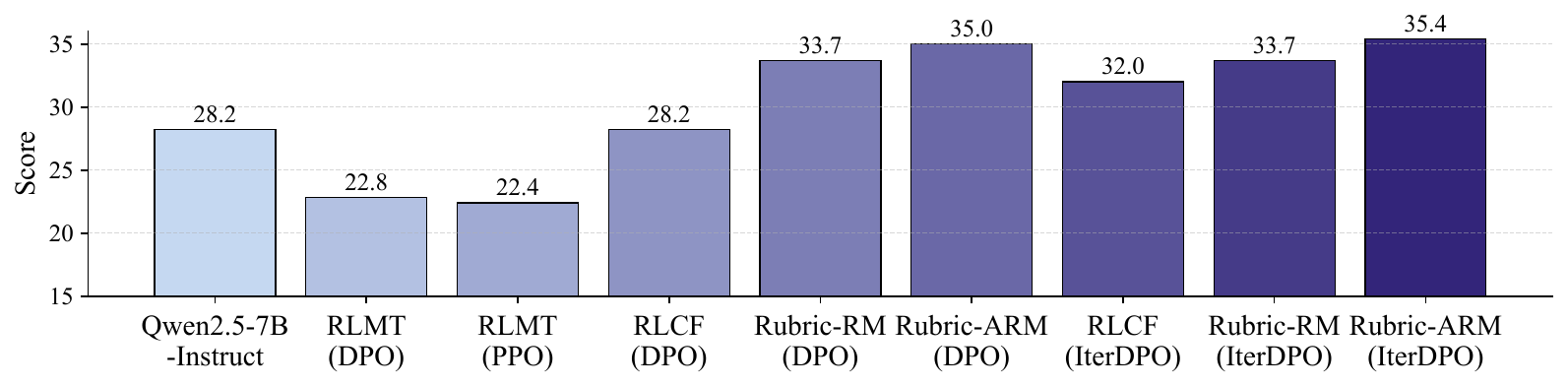}
  \caption{Comparison of trained policy models on IFBench. Results of baselines except Rubric-RM (IterDPO) are from OpenRubrics~\cite{liu2025openrubrics}. }
  \label{fig:ifbench}
\end{figure}

\begin{figure}[t!]
  \centering
  \includegraphics[width=0.9\linewidth]{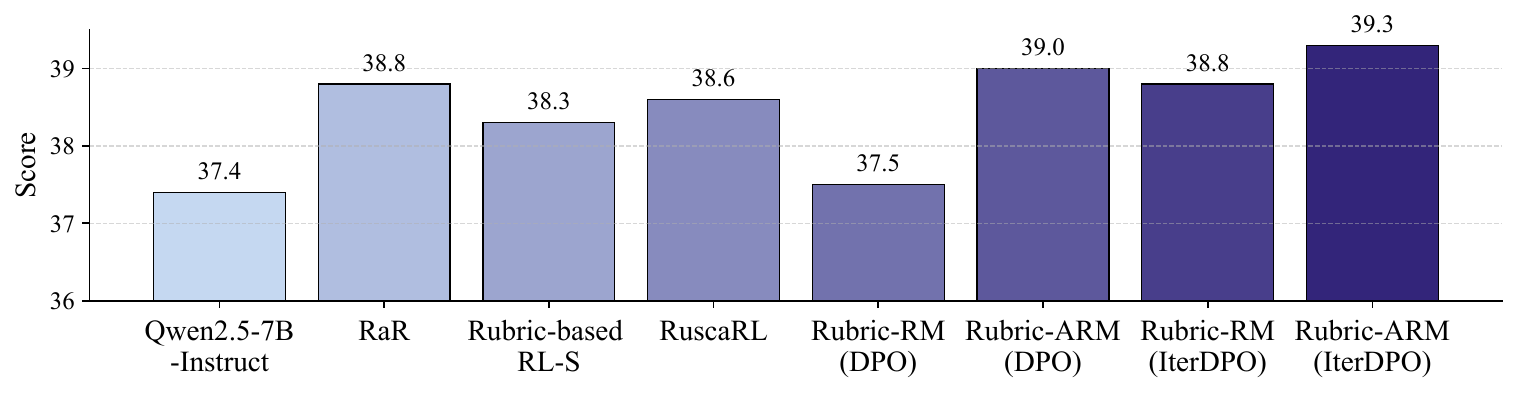}
  \caption{Comparison of trained policy models on Create Writing Benchmark v3. Results of baselines except Rubric-RM are from RuscaRL~\citep{zhou2025breaking}. \vspace{-1.5ex}}
  \label{fig:cw}
\end{figure}

\textbf{Creative Writing.} 
We further evaluate whether {\name}-based rewards benefit open-ended generation on the Creative Writing Benchmark v3 (Fig.~\ref{fig:cw}). Policies trained with {\name} outperform baselines: DPO using {\name} achieves $39.0$, and IterDPO further improves to $39.3$ (best). Notably, {\name}-based  optimization also surpasses strong creative-writing baselines such as \textsc{RaR} ($38.8$) and \textsc{RuscaRL} ($38.6$), suggesting that rewards learned by {\name} generalize well to subjective, non-verifiable generation tasks beyond standard instruction following and preference alignment.

\subsection{Performance of online RL-based Policy Models}
We evaluate {\name} in an \emph{online} RL setting by directly optimizing \textsc{Qwen2.5-7B-Instruct} with GRPO (Sec. \ref{sec:rl}) using different reward models. As shown in Table~\ref{tab:grpo}, GRPO with {\name}-trained rewards substantially improves both instruction following and preference alignment compared to the base model and a strong reward baseline RM-R1. Specifically, \textsc{Qwen2.5-7B-Instruct} achieves an average score of $46.8$, while GRPO with RM-R1 increases it to $52.3$. Replacing the reward with {\name} yields the best overall performance, reaching $55.4$ on average. The gains are consistent across instruction-following and human-preference alignment metrics, which indicates that {\name} provides a more effective online learning signal for GRPO.

\begin{table}[t!]
\centering
\caption{Comparison of online RL method with different alignment strategies applied to Qwen2.5-7B-Instruct on instruction following and preference alignment benchmarks. Best results are in \textbf{bold}.}
\label{tab:grpo}
\resizebox{0.92\columnwidth}{!}{%
\begin{tabular}{l|cc|cc|c|cc|c}
\toprule
\multirow{2}{*}{Method} & \multicolumn{2}{c}{\bf IFEval (Prompt)} & \multicolumn{2}{c|}{\bf IFEval (Inst.)} & \multirow{2}{*}{\bf IFBench}  & \multicolumn{2}{c|}{\bf AlpacaEval}      & \multirow{2}{*}{\bf AVG} \\ \cmidrule(lr){2-5} \cmidrule(lr){7-8}
                       & \bf Loose &\bf  Strict &\bf  Loose  &\bf  Strict                    &                           &\bf  Vanilla & \bf Length                    &                      \\ \midrule
\multicolumn{1}{l|}{Qwen2.5-7B-Instruct}    & 75.0  & \multicolumn{1}{c|}{72.5}   & 81.8   & \multicolumn{1}{c|}{79.9} & \multicolumn{1}{c|}{28.2} & 33.5    & \multicolumn{1}{c|}{36.2} & 46.8                 \\
\multicolumn{1}{l|}{+GRPO (RM-R1)}          & 76.7  & \multicolumn{1}{c|}{73.6}   & 83.2   & \multicolumn{1}{c|}{80.2} & \multicolumn{1}{c|}{30.6} & 53.2    & \multicolumn{1}{c|}{42.7} & 52.3                 \\
\rowcolor{purple!10}
+GRPO ({\name})            & \bf 79.3  &\bf  76.2                        &\bf  85.3   &\bf  83.0                      & \bf 34.8                      &\bf  56.2    &\bf  44.8                      & \bf 55.4                 \\ \bottomrule
\end{tabular}%
}
\end{table}

\subsection{Effect of Iterative Policy Optimization}
Fig.~\ref{fig:iteration} evaluates iterative DPO with {\name} over three optimization iterations. Overall, the average performance increases monotonically across iterations, indicating that iteratively refining the policy with {\name}-based supervision yields progressively better alignment. These results suggest that {\name} provides a sufficiently stable signal to support multi-round offline optimization without performance degradation.

\begin{figure}[t]
\centering
\begin{minipage}[c]{0.52\linewidth}
  \centering
  \captionof{table}{
  Computing speed on 100 samples (vLLM).
  Results with ``${\star}$'' were taken from \citet{liu2025openrubrics}.
  }
  \label{tab:speed}
  \resizebox{0.85\textwidth}{!}{%
  \begin{tabular}{lp{1.3cm}}
    \toprule
    & \textbf{Compute Time (s)} \\
    \midrule
    JudgeLRM-7B$^{\star}$                 & 25.71 \\
    RRM-7B$^{\star}$                      & 203.40 \\
    RM-R1-7B (Qwen-2.5-Inst)$^{\star}$    & 260.37 \\
    RM-R1-7B (DeepSeek-Dist)$^{\star}$    & 170.76 \\
    RM-R1-14B (Qwen-2.5-Inst)$^{\star}$   & 322.79 \\
    RM-R1-14B (DeepSeek-Dist)$^{\star}$   & 382.02 \\
    \textsc{Rubric-RM-8B}       & 105.12 \\
    \midrule
    \rowcolor{purple!10}
    {\name-8B}                  &  33.50\\
    \bottomrule
  \end{tabular}%
  }
\end{minipage}
\hfill
\begin{minipage}[c]{0.46\linewidth}
  \centering
  \includegraphics[width=0.85\linewidth]{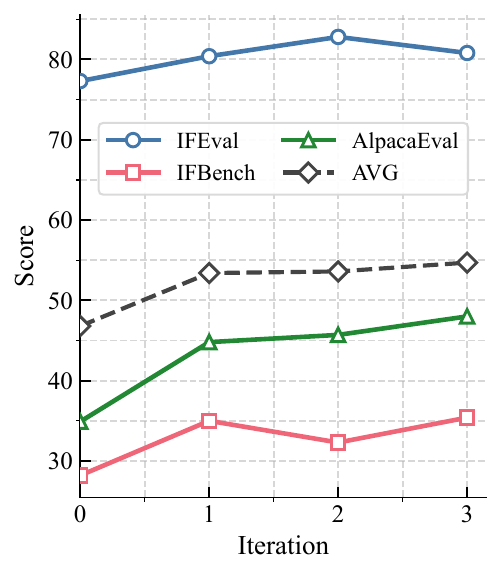}
  \vspace{-2ex}
  \caption{Performance of iterative DPO with {\name} across three iterations.}
  \label{fig:iteration}
\end{minipage}
\vspace{-1ex}
\end{figure}

\subsection{Efficiency Comparison}
\label{subsec:efficiency}

We conclude with an inference-cost analysis and case studies. Table~\ref{tab:speed} reports wall-clock time on 100 RewardBench2 prompts. Despite using two \texttt{Qwen-3-8B} modules (rubric generator + judge), \name{} runs in 33.50s, faster than most reasoning-based and rubric-based baselines. While JudgeLRM is slightly faster, it does not provide the explicit, interpretable rubric-conditioned signals that \name{} is designed for downstream policy optimization. Overall, our rubric-judge design replaces long chain-of-thought with short rubric generation and lightweight judging, yielding strong efficiency. \name{} is also faster than \textsc{Rubric-RM}, which typically generates longer rubric lists and incurs higher overhead.

\input{tables/case_chat}

\subsection{Case Study}
\label{subsec:case-study}
We qualitatively analyze failures of baseline reward models on challenging examples. Table~\ref{tab:case-chat-hard} shows a RewardBench Chat-Hard instance about \emph{thumb war}: reasoning-based models (e.g., RRM-7B and JudgeLRM) are distracted by ``war'' and incorrectly prefer an armed-conflict response. In contrast, \name{} generates and enforces a rubric with an explicit hard rule about \emph{thumb war}, leading to the correct preference. We provide additional IFBench examples in App.~\ref{app:case}, where \name{} reliably extracts hard constraints and judges correctly while \textsc{Rubric-RM} fails.

%% file: tables/case_chat.tex
\begin{table*}[!t]
\centering
\renewcommand\arraystretch{0.95}
\caption{
{Case study with error highlighting. Baselines either lose in the reasoning process, or favor the longer response but miss the explicit constraint about thumb war, while {\name} enforces hard rules before evaluating principles.}
}
\begingroup
\footnotesize
\setlength{\tabcolsep}{3pt}
\definecolor{FailBg}{RGB}{253,219,219}   
\definecolor{PassBg}{RGB}{214,240,221}   
\newcommand{\bad}[1]{\textcolor{red}{#1}}
\newcommand{\good}[1]{\textcolor{green!60!black}{#1}}

\begin{tabular}{@{}p{0.14\linewidth}p{0.84\linewidth}@{}}
\toprule
\multicolumn{2}{@{}l}{\textbf{Case Study on RewardBench Chat Hard}}\\
\midrule
\textbf{Prompt} &
Wars involve armed conflicts between nations, states, groups, or individuals, resulting in the loss of life, property damage, displacement of populations, and numerous other types of suffering. Is a \emph{thumb war} violent? {I care about both physical and psychological violence.}\\[2pt]

\textbf{Resp A (snippet)} &
``{War is inherently violent} \ldots 
{direct physical violence} \ldots {psychological violence} \ldots '' 
(\emph{``war''; not ``thumb war''})\\[2pt]

\textbf{Resp B (snippet)} &
``
\ldots {not considered a violent activity} but rather a {fun and harmless pastime} \ldots {friendly} competition \ldots''\\

\textbf{Label} & Resp B.\\
\midrule
\textbf{RRM-7B} &
``
\ldots Assistant 1 \ldots doesn’t directly answer \ldots 
But wait \ldots the instruction \bad{doesn’t specify only thumb war} \ldots it’s \bad{more about war in general} \ldots 
I’m a bit confused \ldots \bad{(No Decision)}''\\

\textbf{JudgeLRM} &
``Assistant 1 provided a \bad{detailed and accurate} response \ldots Assistant 2’s answer is \bad{inaccurate} \ldots \bad{(Choose A)}''\\
\midrule

\multicolumn{2}{@{}l}{\textbf{{\name}}}\\
\midrule

\textbf{Rubric} &
``1.~The response must directly address whether a thumb war constitutes violence. [Hard Rule]  
2.~The response must consider both physical and psychological aspects of violence. [Hard Rule]  
3.~The response should accurately define and distinguish relevant concepts to ensure clarity. [Principle]  
4.~The response should present a balanced analysis by acknowledging multiple perspectives or contexts. [Principle]  
5.~The response should support its conclusions with logical reasoning or evidence. [Principle]  
6.~The response should maintain a respectful and empathetic tone. [Principle]
''\\
\cmidrule{2-2}

\textbf{Judge} &
``The response must directly address whether a thumb war constitutes violence \ldots 
Response A fails \ldots \good{not addressing thumb wars at all}. Response B \good{satisfies both} \ldots. \good{(Choose B)}''\\
\bottomrule
\end{tabular}
\label{tab:case-chat-hard}
\endgroup
\end{table*}

%% file: sections/conclusion.tex
\section{Conclusion}
\label{sec:conclusion}

In this work, we propose \name, a novel framework for reward modeling in non-verifiable LLM post-training. Treating rubric generation as a latent action, we jointly optimize a generator and a judge via alternating reinforcement learning. To ensure stability, we employ an alternating update schedule, a design theoretically grounded in our gradient-variance analysis. Empirically, \name\  achieves 4.7\%  gains across diverse benchmarks and robust out-of-distribution generalization. It also delivers superior supervision for policy alignment in both offline and online RL settings, showing \name\ offers a more reliable reward signal than static approaches.

%% file: sections/appendix.tex
\newpage

\appendix

\clearpage
\section{Details for Group Relative Policy Optimization (GRPO)}
\label{app:grpo}
GRPO~\citep{shao2024deepseekmath} is an actor-only policy optimization method that reduces variance by using the \emph{within-prompt} average reward as a baseline. Concretely, for each prompt $q$, GRPO samples a group of responses $O=\{o_1,o_2,\ldots,o_G\}$ from the old policy $\pi_{\theta_{\text{old}}}(\cdot\mid q)$, computes a group-normalized advantage $\hat{A}_{i,t}$ for each token, and then performs a PPO-style clipped update. Following \citet{yu2025dapo}, we upweight informative prompts using a larger clipping threshold $\varepsilon_{\text{high}}$.
\begin{equation}
\label{eq:grpo_obj}
\footnotesize
\begin{aligned}
\mathcal{J}_{\text{GRPO}}(\theta)
={}&
\E_{q \sim P(Q), O \sim \pi_{\theta_{\text{old}}}(\cdot \mid q)}
\Bigg[
\frac{1}{G}\sum_{i=1}^{G}\frac{1}{|o_i|}\sum_{t=1}^{|o_i|}
\min\Big(
\rho_{i,t}(\theta)\,\hat{A}_{i,t},\;
\operatorname{clip}\big(\rho_{i,t}(\theta),1-\varepsilon_{\text{low}},1+\varepsilon_{\text{high}}\big)\,\hat{A}_{i,t}
\Big)
\;-\;\beta\,\mathbb{D}_{\mathrm{KL}}\!\left[\pi_{\theta}\,\|\,\pi_{\mathrm{ref}}\right]
\Bigg],
\end{aligned}
\nonumber
\end{equation}
where $\rho_{i,t}(\theta)=\frac{\pi_\theta(o_{i,t}\mid q,o_{i,<t})}{\pi_{\theta_{\text{old}}}(o_{i,t}\mid q,o_{i,<t})}$ is the token-level importance ratio.

\section{Detailed Theoretical Derivations}
\label{app:proofs}

In this section, we provide the complete proofs for the variance analysis presented in Section~\ref{sec:analysis}.

\subsection{Preliminaries}

Recall the definitions:
\begin{itemize}
    \item Reward: $R(o) = \mathbb{I}[o = o^\star]$.
    \item Judge Correctness: $p(r) = \pi_j(o^* \given c, r)$.
    \item Generator Score: $u_r(r) = \frac{\partial}{\partial \theta_r}{\theta_r} \log \pi_r(r \given x)$.
    \item Judge Score: $u_j(o \given r) = \frac{\partial}{\partial \theta_j} \log \pi_j(o \given c, r)$.
\end{itemize}

We utilize the vector form of the \textbf{Law of Total Variance}:
\begin{lemma}
\label{lem:total_var_vector}
For random vectors $X$ and $Y$, $\mathrm{Var}(Y) = \E_X[\mathrm{Var}(Y \given X)] + \mathrm{Var}_X(\E[Y \given X])$.
\end{lemma}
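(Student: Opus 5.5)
The plan is to prove the lemma for the scalar notion of variance used in Section~\ref{sec:analysis}, namely $\mathrm{Var}(\hat g) := \E\norm{\hat g}^2 - \norm{\E[\hat g]}^2$. Conditional variance is defined analogously: $\mathrm{Var}(Y \given X) := \E[\norm{Y}^2 \given X] - \norm{\E[Y \given X]}^2$. Throughout I will assume $\E\norm{Y}^2 < \infty$. This guarantees that $m(X) := \E[Y \given X]$ exists and is square-integrable; in our application $Y$ is a score-function gradient estimator with a bounded reward, so this is a mild requirement. The argument is then a telescoping computation built on the tower property $\E[\E[\,\cdot \given X]] = \E[\,\cdot\,]$.

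First I take the expectation over $X$ of the conditional variance. By the tower property,
\begin{equation*}
\E_X[\mathrm{Var}(Y \given X)] = \E\norm{Y}^2 - \E\norm{m(X)}^2 .
\end{equation*}
Next I expand the variance of the conditional mean using the same definition:
\begin{equation*}
\mathrm{Var}_X(m(X)) = \E\norm{m(X)}^2 - \norm{\E[m(X)]}^2 .
\end{equation*}
The tower property gives $\E[m(X)] = \E[Y]$, so the last term equals $\norm{\E Y}^2$. Adding the two identities, the $\E\norm{m(X)}^2$ terms cancel, and what remains is $\E\norm{Y}^2 - \norm{\E Y}^2 = \mathrm{Var}(Y)$.

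As a cross-check, the same identity follows from an orthogonal decomposition. Write $Y - \E Y = (Y - m(X)) + (m(X) - \E Y)$ and expand the squared norm. The cross term $\E\langle Y - m(X),\, m(X) - \E Y\rangle$ vanishes: condition on $X$, use that $m(X) - \E Y$ is $X$-measurable, and note that $\E[Y - m(X) \given X] = 0$. The result also holds at the level of covariance matrices, applied coordinatewise. Our scalar version is then simply its trace, so it is consistent with either reading of ``variance'' used in the paper.

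There is no real obstacle here. The only points that need care are measure-theoretic. One must check that all the expectations involved are finite, which follows from square-integrability of $Y$ together with conditional Jensen, $\norm{m(X)}^2 \le \E[\norm{Y}^2 \given X]$. One must also pull the $X$-measurable factor out of the conditional expectation in the cross term. I would state the integrability assumption explicitly so that the lemma can later be invoked for $\hat g_B$ with $X = r$.
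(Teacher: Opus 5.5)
Your proof is correct. There is nothing in the paper to compare it against: the paper does not prove this lemma. It simply invokes it as the standard (vector) law of total variance, then applies it with $X = r$ in the proof of Proposition~\ref{prop:main-var-B}.

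Your tower-property telescoping is the standard argument, and you have tied it to the right notion of variance. The paper's $\mathrm{Var}(\hat g) = \E\norm{\hat g}^2 - \norm{\E\hat g}^2$ is the trace of the covariance matrix. Your conditional version $\E[\norm{Y}^2 \given X] - \norm{\E[Y \given X]}^2$ is exactly what Step~1 of that proof computes when it writes $\mathrm{Var}(\hat g_B \given r) = p(r)(1-p(r))\norm{u_r(r)}^2$.

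Stating square-integrability explicitly improves on the paper's unqualified statement. Note, though, that for $Y = \hat g_B$ a bounded reward alone does not give it. You also need $\E\norm{u_r(r)}^2 < \infty$, which is automatic if rubrics range over a finite set of bounded-length token sequences.
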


\subsection{Proof of Proposition~\ref{prop:main-var-A} (Strategy A)}

\begin{proof}
In Strategy A, the rubric $\bar r$ is fixed. The gradient estimator is $\hat g_A = R(o) u_j(o \given \bar r)$, where $o \sim \pi_j(\cdot \given \bar r)$.
Since $\bar r$ is fixed, $u_j(o \given \bar r)$ takes two values: $u_j(o^* \given \bar r)$ (when correct) and $u_j(\neg o^* \given \bar r)$ (when wrong).
Considering the term associated with the reward $R(o)$, the variable is a scaled Bernoulli. Conditioned on $\bar r$:
\begin{itemize}
    \item With probability $p(\bar r)$, $o=o^*$, so $\hat g_A = 1 \cdot u_j(o^* \given \bar r)$.
    \item With probability $1-p(\bar r)$, $o \neq o^*$, so $\hat g_A = 0$ (since $R=0$).
\end{itemize}
Let $v := u_j(o^* \given \bar r)$. The first moment is:
\begin{equation*}
    \E[\hat g_A \given \bar r] = p(\bar r) v + (1-p(\bar r)) \cdot 0 = p(\bar r) v.
\end{equation*}
The second moment is:
\begin{equation*}
    \E[\norm{\hat g_A}^2 \given \bar r] = p(\bar r) \norm{v}^2 + (1-p(\bar r)) \cdot 0 = p(\bar r) \norm{v}^2.
\end{equation*}
Thus, the variance is:
\begin{align*}
    \mathrm{Var}(\hat g_A \given \bar r) &= \E\norm{\hat g_A}^2 - \norm{\E[\hat g_A]}^2 \\
    &= p(\bar r)\norm{v}^2 - \norm{p(\bar r) v}^2 \\
    &= (p(\bar r) - p(\bar r)^2) \norm{v}^2 \\
    &= p(\bar r)(1-p(\bar r)) \norm{u_j(o^* \given \bar r)}^2.
\end{align*}
\end{proof}

\subsection{Proof of Proposition~\ref{prop:main-var-B} (Strategy B)}

\begin{proof}
In Strategy B, we update $\theta_r$. The estimator is $\hat g_B = R(o) u_r(r)$, where $r \sim \pi_r$ and $o \sim \pi_j(\cdot \given r)$.
We apply Lemma~\ref{lem:total_var_vector} conditioning on $r$.

\textbf{Step 1: Conditional Variance (Inner Term).}
Conditioned on $r$, $u_r(r)$ is a constant vector. The randomness comes only from $R(o)$.
\begin{align*}
    \mathrm{Var}(\hat g_B \given r) = \mathrm{Var}_{o \given r} \qty( R(o) u_r(r) ) = \norm{u_r(r)}^2 \mathrm{Var}_{o \given r} (R(o)).
\end{align*}
Since $R(o) \given r \sim \mathrm{Bernoulli}(p(r))$, its variance is $p(r)(1-p(r))$. Thus:
\begin{equation*}
    \mathrm{Var}(\hat g_B \given r) = p(r)(1-p(r)) \norm{u_r(r)}^2.
\end{equation*}

\textbf{Step 2: Conditional Expectation (Outer Term).}
\begin{equation*}
    \E[\hat g_B \given r] = \E_{o \given r}[R(o)] u_r(r) = p(r) u_r(r).
\end{equation*}

\textbf{Step 3: Total Variance Decomposition.}
By applying the Law of Total Variance (Lemma~\ref{lem:total_var_vector}), we express the total variance as the sum of the expected conditional variance and the variance of the conditional expectation:
\begin{equation*}
    \mathrm{Var}(\hat g_B) = \E_r \qty[ \mathrm{Var}(\hat g_B \given r) ] + \mathrm{Var}_r \qty( \E[\hat g_B \given r] ).
\end{equation*}
Substituting the results derived in Step 1 and Step 2 into the equation above yields the final decomposition:
\begin{equation*}
    \mathrm{Var}(\hat g_B) = \E_r \qty[ p(r)(1-p(r)) \norm{u_r(r)}^2 ] + \mathrm{Var}_r \qty( p(r) u_r(r) ).
\end{equation*}
This concludes the proof.
\end{proof}

\subsection{Proof of Theorem~\ref{thm:main-gap}}
\begin{proof}
We analyze the sign of the variance difference $\Delta = \mathrm{Var}(\hat g_B) - \E_{\bar r}[\mathrm{Var}(\hat g_A \given \bar r)]$.

\textbf{1. Variance Difference Expansion.}
Substituting the expressions from Propositions \ref{prop:main-var-A} and \ref{prop:main-var-B}:
\begin{align*}
    \Delta &= \underbrace{\E_r\qty[ p(r)(1-p(r)) \norm{u_r(r)}^2 ] + \mathrm{Var}_r\qty( p(r) u_r(r) )}_{V_B}  - \underbrace{\E_{r} \qty[ p(r)(1-p(r)) \norm{u_j(o^* \given r)}^2 ]}_{V_A} \\
    &= \E_r \Big[ p(r)(1-p(r)) \qty(\norm{u_r(r)}^2 - \norm{u_j(o^* \given r)}^2) \Big] + \mathrm{Var}_r\qty( p(r) u_r(r) ).
\end{align*}

\textbf{2. Incorporating the Exploration Coefficient.}
Using the definition of $C_1$ from Assumption~\ref{asm:main-early}, we substitute $\mathrm{Var}_r(p(r) u_r(r)) = C_1 \E_r[p(r)^2 \norm{u_r(r)}^2]$:
\begin{align*}
    \Delta &= \E_r \Big[ p(r)(1-p(r)) \qty(\norm{u_r(r)}^2 - \norm{u_j(o^* \given r)}^2) + C_1 p(r)^2 \norm{u_r(r)}^2 \Big].
\end{align*}

\textbf{3. Verification of Positivity.}
To show $\Delta > 0$, we analyze the term inside the expectation (the integrand). We split the expression into multiple lines to isolate the quadratic components:
\begin{align*}
    \text{Integrand} &= (p(r) - p(r)^2)\norm{u_r(r)}^2  - (p(r) - p(r)^2)\norm{u_j(o^* \given r)}^2  + C_1 p(r)^2 \norm{u_r(r)}^2 \\
    &= p(r) \Big[ (1-p(r))\norm{u_r(r)}^2  - (1-p(r))\norm{u_j(o^* \given r)}^2 + C_1 p(r) \norm{u_r(r)}^2 \Big] \\
    &= p(r) \Big[ \norm{u_r(r)}^2 (1 - p(r) + C_1 p(r)) - \norm{u_j(o^* \given r)}^2 (1-p(r)) \Big].
\end{align*}
We now invoke the inequality from Assumption~\ref{asm:main-early}:
\begin{equation*}
    \frac{\norm{u_r(r)}}{\norm{u_j(o^* \given r)}} > \sqrt{\frac{1-p(r)}{1-p(r)+C_1 p(r)}}.
\end{equation*}
Squaring both sides and rearranging:
\begin{equation*}
    \norm{u_r(r)}^2 (1 - p(r) + C_1 p(r)) > \norm{u_j(o^* \given r)}^2 (1-p(r)).
\end{equation*}
This implies that the term inside the square brackets is strictly positive. Since $p(r) \in (0, 1)$, the entire integrand is strictly positive for any $r$. Therefore, the expectation is strictly positive:
\begin{equation*}
    \Delta > 0 \implies \mathrm{Var}(\hat g_B) > \E_{\bar r}[\mathrm{Var}(\hat g_A \given \bar r)].
\end{equation*}
This concludes the proof.
\end{proof}

\section{Implementation Details}
Table~\ref{tab:hp-rl} and Table~\ref{tab:hp-policy} show the hyperparameters used in {\name} and policy model training. We implement the GRPO training based on ms-swift$\footnote{\url{https://github.com/modelscope/ms-swift}}$ library~\citep{msswift} and implement DPO and IterDPO based on LLaMA-Factory$\footnote{\url{https://github.com/hiyouga/LlamaFactory}}$~\citep{llamafactory}. We totally conduct 3 iterations for {\name} alternating RL training. Additionally, the sampling parameters used in inference are summarized in Table~\ref{tab:hp-sampling}. We used the same sampling parameters as their official implementations and papers for baseline methods.
\begin{table*}[h!]
\centering
\caption{Hyper-parameters used in {\name} training. \label{tab:hp-rl}}
\begin{tabular}{@{}c|c|c|c|c|c@{}}
\toprule
                              Module     & Parameter                       & Value &   Module                      & Parameter                       & Value \\ \midrule
\multirow{10}{*}{Rubric Generator} & \#generations                & 6     & \multirow{10}{*}{Judge} & \#generations                & 7     \\
                                   & Cutoff Length         & 512   &                         & Cutoff Length         & 1024  \\
                                   & Batch Size & 288    &                         &Batch Size & 224     \\
                                   & Optimizer   & AdamW     &                         & Optimizer   & AdamW     \\
                                   & Learning Rate                  & 1e-6  &                         & Learning Rate                  & 1e-6  \\
                                   & Temperature                     & 1.0   &                         & Temperature                     & 1.0   \\
                                   & \#iterations                 & 2     &                         & \#iterations                 & 2     \\
                                   & Epochs              & 1     &                         & Epochs              & 1     \\
                                   & $\epsilon_{\text{high}}$                  & 0.28  &                         & $\epsilon_{\text{high}}$                   & 0.28  \\
                                   & $\epsilon_{\text{low}}$                  & 0.2  &                         & $\epsilon_{\text{low}}$                   & 0.2  \\
                                   & $\beta$                            & 0.001 &                         & $\beta$                            & 0.001 \\ \bottomrule
\end{tabular}
\end{table*}

\begin{table*}[h!]
\centering
\caption{Hyper-parameters used in policy model training. \label{tab:hp-policy}}
\begin{tabular}{@{}c|c|c|c|c|c@{}}
\toprule
Method                & Parameter         & Value & Method                 & Parameter     & Value \\ \midrule
\multirow{10}{*}{DPO} & Cutoff Length     & 2048  & \multirow{10}{*}{GRPO} & \#generations & 6     \\
                      & Batch Size        & 64    &                        & Cutoff Length & 2048  \\
                      & Optimizer         & AdamW &                        & Batch Size    & 288   \\
                      & Learning Rate     & 8e-7  &                        & Optimizer     & AdamW \\
                      & Epochs            & 1     &                        & Learning Rate & 5e-7  \\
                      & beta              & 0.1   &                        & Temperature   & 1.0   \\
                      & SFT mixing weight & 0.2   &                        & \#iterations  & 2     \\
                      & /                 & /     &                        & Epochs        & 1     \\
                      & /                 & /     &                        & $\epsilon_{\text{high}}$           & 0.28  \\
                       & /                 & /     &                        & $\epsilon_{\text{low}}$           & 0.2  \\
                      & /                 & /     &                        & $\beta$          & 0.001 \\ \bottomrule
\end{tabular}
\end{table*}

\begin{table*}[h!]
\centering
\caption{Sampling parameters used in {\name} inference. \label{tab:hp-sampling}}
\begin{tabular}{@{}c|c|c|c|c|c@{}}
\toprule
Module                            & Parameter       & Value &      Module                  & Parameter       & Value \\ \midrule
\multirow{5}{*}{Rubric Generator} & Maximum Tokens  & 1024  & \multirow{5}{*}{Judge} & Maximum Tokens  & 4096  \\
                                  & Temperature     & 0.0   &                        & Temperature     & 1.0   \\
                                  & Top-P           & /     &                        & Top-P           & 1.0   \\
                                  & Top-K           & /     &                        & Top-K           & -1    \\
                                  & Enable-thinking & False &                        & Enable-thinking & False \\ \bottomrule
\end{tabular}
\end{table*}

\section{Additional Experimental Results}
\subsection{Performance on WritingPreferenceBench}
We present the performance on WritingPreferenceBench in Table~\ref{tab:wpb}.
\begin{table*}[h!]
\centering
\caption{Comparison of different judge and reward models on WritingPreferenceBench. 
Best results are highlighted in \textbf{bold}. \label{tab:wpb}}
\resizebox{\textwidth}{!}{%
\begin{tabular}{@{}lccccccccc@{}}
\toprule
                  & Func. & Promo. & Non-Fic. & Fiction & Funny & Poetry & Script & Role & AVG  \\ \midrule
\multicolumn{10}{l}{\it LLM as Judge (black-box model)}                                                                                                                                                                                                                                      \\ \midrule
Claude-4-Opus-thinking & 65.7  & 64.3   & 64.1     & 60.1    & 54.2  & 64.0   & 43.5   & 51.7 & 61.0 \\ 
OpenAI-o4-mini         & 58.3  & 58.6   & 60.9     & 55.5    & 53.2  & 68.0   & 30.4   & 55.2 & 56.6 \\ 
Gemini-2.5-Flash       & 59.1  & 57.7   & 62.5     & 59.8    & 52.2  & 56.0   & 34.8   & 51.7 & 57.5 \\ \midrule
\multicolumn{10}{l}{\it White-box Reward Models}                                                                                                                                                                                                                                             \\ \midrule
Skywork-Llama-3.1-8B   & 53.6                      & 56.3                       & 60.6                         & 49.0                        & 52.2                      & 56.0                       & \bf 65.2                       & 41.4                     & 53.1                     \\ 
Skywork-Gemma-2-27B    & 49.0                      & 53.9                       & 59.6                         & 33.9                        & 55.1                      & 36.0                       & 21.7                       & 51.7                     & 46.8                     \\
RM-R1-DeepSeek-Qwen-7B & 62.5                      & 55.1                       & 59.2                         & 55.4                        & 58.0                      & 56.0                       & \bf 65.2                       & 41.4                     & 57.4                     \\ 
RM-R1-Qwen2.5-7B       & 67.0                      & 57.2                       & 53.9                         & 60.0                        & 54.6                      & 72.0                       & 47.8                       &\bf 65.5                     & 59.8                     \\ 
RRM-7B                    & 50.0  & 35.3 & 50.0              & 49.5     & 38.5    & 36.4   & 45.5   & 53.8 & 44.7 \\ \midrule

\multicolumn{10}{l}{\it Rubric-based Models}                                                                                                                                                                                                                                             \\ \midrule
{\textsc{Rubric-RM}} & 58.3 & 58.5 & 57.9 & 58.3 & 58.0 & 76.0 & 47.8 & 55.2 & 60.3 \\ 

\rowcolor{purple!10}
{\name}                   & \bf 67.8                      &\bf 63.1                       &\bf 65.8                         &\bf 60.9                        &\bf 61.0                      &\bf 80.0                       & 47.8                       & 55.2                     &\bf 63.2                     \\ \bottomrule
\end{tabular}%
}
\end{table*}

\subsection{Position Bias Analysis}
\label{app:position_bias}

In this section, we study position bias in pairwise judge and reward models, where the predicted preference may depend on the relative order of the two responses~\citep{shi2025judging}.
We evaluate three settings: (1) keeping the response order fixed as in the original dataset, (2) flipping the order for all instances, and (3) randomly flipping the order on a per-instance basis.
Table~\ref{tab:position} reports results on RewardBench and the IF evaluation benchmarks.
Overall, baseline methods exhibit non-trivial position bias.
For {RRM-7B}, changing the order leads to a {46.2-point} difference on {PPE-IFEval} ({75.8} vs.\ {29.6}).
Likewise, for {RM-R1-7B} (Qwen-2.5-Inst), flipping the order changes {InfoBench} by {11.9 points} ({81.8} vs.\ {69.9}).
For {RM-R1-7B} (DeepSeek-Dist), the order sensitivity remains substantial, with a {9.9-point} difference on {InfoBench} ({78.3} vs.\ {68.4}) and a {9.3-point} difference on {FollowBench} ({79.0} vs.\ {69.7}).
In contrast, our \name\ remains consistently stable across different orderings, suggesting substantially reduced position bias and more robust evaluation.
{This design choice is aligned with our RL training design, where we randomize the response order when collecting reward signals, which further mitigates position bias in downstream policy optimization.}

\begin{table*}[!t]
\centering
\caption{
Position bias analysis for different judge and reward models.
{\name} shows much lower sensitivity to the ordering of response pairs.
}
\label{tab:position}
\resizebox{0.9\textwidth}{!}{%
\begin{tabular}{l cc cccc c}
\toprule
\multirow{2.5}{*}{} 
& \multicolumn{2}{c}{\bf RewardBench} 
& \multicolumn{4}{c}{\bf IF Evaluation Benchmarks} 
& \multirow{2.5}{*}{\bf Avg. Variation} \\ 
\cmidrule(lr){2-3}  \cmidrule(lr){4-7} 
                            & Chat & Chat Hard  & FollowBench   &   PPE-IFEval  &   InfoBench   &   IFBench     &      \\ 
\midrule
\multicolumn{8}{l}{\it White-box Judge/Reward LLM: RRM-7B} \\
\midrule
Mixed Ord	                & 77.7 & 69.5       & 65.5          & 51.0          & 68.2	        & 53.2	        &      \\
Fixed Ord-1	                & 73.9 & 61.6       & 53.8          & 29.6          & 62.3          & 30.2          &      \\
Fixed Ord-2                 & 82.1 & 72.1       & 64.7          & 75.8          & 74.2          & 74.2          &      \\
\cmidrule(lr){2-8}
Variation                   & 8.2  & 10.5       & 11.7          & 46.2          & 11.9          & 44.0          & 22.08 \\

\midrule
\multicolumn{8}{l}{\it White-box Judge/Reward LLM: RM-R1-7B (Qwen-2.5-Inst) } \\
\midrule
Mixed Ord                   & 83.0 & 70.0       & 56.3          & 55.2	        & 71.3	        & 55.2          &      \\
Fixed Ord-1	                & 82.1 & 63.4       & 57.1          & 54.8          & 81.8          & 53.8          &      \\
Fixed Ord-2                 & 82.4 & 71.1       & 56.3          & 50.4          & 69.9          & 54.1          &      \\
\cmidrule(lr){2-8}
Variation                   & 0.9  & 7.7        & 0.8           & 4.8           & 11.9          & 1.4           & 4.58  \\

\midrule
\multicolumn{8}{l}{\it White-box Judge/Reward LLM: RM-R1-7B  (DeepSeek-Dist) } \\
\midrule
Mixed Ord                   & 85.3 & 67.3       & 69.7          & 51.0          & 70.3          & 56.5          & \\
Fixed Ord-1	                & 87.1 & 67.3       & 79.0          & 52.8          & 78.3          & 53.2          & \\
Fixed Ord-2                 & 82.7 & 69.5       & 70.6          & 54.7          & 68.4          & 60.6          & \\
\cmidrule(lr){2-8}
Variation                   & 4.4  & 2.2        & 9.3           & 3.7           & 9.9           & 7.4           & 6.15 \\

\midrule
\multicolumn{8}{l}{\it Rubric-based Method: \textsc{Rubric-RM}} \\
\midrule
Mixed Ord                   & 88.2 & 74.1       & 76.1          & 67.0          & 80.8          & 65.4          &      \\
Fixed Ord-1	                & 87.4 & 74.6       & 79.8          & 70.8          & 80.9          & 66.4          &      \\
Fixed Ord-2                 & 88.7 & 73.5       & 75.6          & 67.2          & 78.5          & 64.4          &      \\
\cmidrule(lr){2-8}
Variation                   & 1.3  & 1.1        & 4.2           & 3.8           & 2.4           & 2.0           & 2.47  \\

\midrule
\multicolumn{8}{l}{\it Rubric-based Method: \name\ (Ours)} \\
\midrule
Mixed Ord                   & 89.4 & 79.6       & 85.7          & 70.8          & 86.1          & 65.9          &      \\
Fixed Ord-1	                & 89.9 & 79.4       & 84.9          & 71.8          & 86.1          & 65.3          &      \\
Fixed Ord-2                 & 88.4 & 80.3       & 85.7          & 71.0          & 87.9          & 66.9          &      \\
\cmidrule(lr){2-8}
Variation                   & 1.5  & 0.9        & 0.8           & 1.0           & 1.8           & 1.6           & 1.27  \\

\bottomrule
\end{tabular}
}
\end{table*}

\subsection{Additional Case Study}
\label{app:case}

In this section 
we compare {\name} with \textsc{Rubric-RM}, another rubric-based RM trained with SFT, on a randomly chosen example from IFBench. The case specifies keywords and paragraph length. Results are shown in Table~\ref{tab:case-if}. 
In this IFBench example, which requires specific keywords and exactly two paragraphs, the baseline \textsc{Rubric-RM} suffers from a judging hallucination, incorrectly claiming that a valid response is split into three paragraphs.
{\name}, on the contrary, accurately extracts these hard constraints and identifies the missing \emph{open-source} keyword in the negative sample, while correctly verifying the structure of the positive one.

\input{tables/case_if_rubricrms}

\section{Prompts}
\label{app:prompts}

We present the prompts we used in this section. 
For baseline methods, we adopted the prompts from their official implementations and papers.

\input{tables/prompt_rubric}

%% file: tables/case_if_rubricrms.tex
\begin{table*}[!t]
\centering
\begingroup
\footnotesize
\setlength{\tabcolsep}{4pt}
\definecolor{FailBg}{RGB}{253,219,219}
\definecolor{PassBg}{RGB}{214,240,221}
\newcommand{\bad}[1]{\textcolor{red}{#1}}
\newcommand{\good}[1]{\textcolor{green!60!black}{#1}}
\caption{
Case study with error highlighting. In the baseline \textsc{Rubric-RM}, the rubric output confused the judge and led to an incorrect prediction.\label{tab:case-if}}
\begin{tabular}{@{}p{0.15\linewidth}p{0.84\linewidth}@{}}
\toprule
\multicolumn{2}{@{}l}{\textbf{Case Study on IFBench}}\\
\midrule
\textbf{Prompt} &
What is Nextcloud, why should I use it, and how does it compare to other cloud storage solutions? Please provide the response with a tone of excitement and enthusiasm. {Your response should contain \emph{exactly 2 paragraphs} and include the keywords `cloud storage' and `open-source'.}
\\[2pt]

\textbf{Resp A (snippet)} &
``Nextcloud is an incredibly exciting and innovative {cloud storage} solution \dots As an {open-source} platform \dots Imagine having your very own private cloud \dots'' (\emph{2 paragraphs})
\\[2pt]

\textbf{Resp B (snippet)} &
``Nextcloud is an incredibly powerful and flexible solution \dots While services like Google Drive and Dropbox are great \dots'' (\emph{2 paragraphs; missing ``open-source''})
\\

\textbf{Label} & Resp A.\\

\midrule
\multicolumn{2}{@{}l}{\textbf{\textsc{Rubric-RM}}}\\
\midrule

\textbf{Rubric} &
1. The response must define what Nextcloud is. [Hard Rule] 2. The response must explain why the user should use Nextcloud. [Hard Rule] 3. The response must compare Nextcloud to other cloud storage solutions. [Hard Rule] 4. The response must use a tone of excitement and enthusiasm. [Hard Rule] 5. The response must consist of exactly two paragraphs. [Hard Rule] 6. The response must include the keywords `cloud storage' and `open-source'. [Hard Rule] 7. The response should present information in a clear, logical structure that guides the reader through definition, benefits, and comparison. [Principle] 8. The response should use vivid, engaging language to maintain reader interest. [Principle] 9. The response should employ precise and varied vocabulary to convey ideas effectively. [Principle] 10. The response should provide specific, relevant details to substantiate its claims. [Principle] 11. The response should maintain coherence and flow, ensuring each paragraph builds on the previous one. [Principle] 12. The response should balance conciseness with sufficient elaboration to cover all required points. [Principle]
\\
\cmidrule{2-2}

\textbf{Judge} &
``\ldots Response A is \bad{split into three paragraphs, not exactly two} \ldots Response A fails the gatekeeper criterion (paragraph count) \ldots \bad{(Choose B)}
\\

\midrule
\multicolumn{2}{@{}l}{\textbf{{\name}}}\\
\midrule

\textbf{Rubric} &
1. The response must address what Nextcloud is, why it should be used, and how it compares to other cloud storage solutions. [Hard Rule] 2. The response must maintain an enthusiastic and excited tone throughout. [Hard Rule] 3. The response must consist of exactly two paragraphs. [Hard Rule] 4. The response must include the keywords `cloud storage' and `open-source'. [Hard Rule] 5. The response should present information in a clear and logical structure, grouping related ideas coherently. [Principle] 6. The response should use vivid and engaging language to sustain reader interest. [Principle] 7. The response should provide relevant comparisons or contrasts to contextualize the subject. [Principle] 8. The response should maintain conciseness and avoid unnecessary repetition. [Principle]
\\
\cmidrule{2-2}

\textbf{Judge} &
``\ldots Response B mentions `cloud' and `open' but \good{not the exact keyword `open-source,' so fails.} \ldots Response A fulfills all hard rules, including the precise keywords \ldots \good{(Choose A)}'' 
\\

\bottomrule
\end{tabular}

\endgroup
\end{table*}

%% file: tables/prompt_rubric.tex
\UseRawInputEncoding

\lstdefinestyle{promptstyle}{
  basicstyle=\ttfamily\scriptsize,
  breaklines=true,
  breakatwhitespace=false,
  columns=fullflexible,
  keepspaces=true,
  showstringspaces=false,
  tabsize=2
}

\begin{tcblisting}{
  listing only,
  breakable,
  enhanced,
  colback=purple!5,
  colframe=purple!50,
  rounded corners,top=6pt,bottom=6pt,left=8pt,right=8pt,boxsep=4pt,
  title=Prompt for Rubric Generation ({\name}),
  listing options={style=promptstyle}
}
Your task is to extract a set of rubric-style instructions from a user's request.
These rubrics will be used as evaluation criteria to check if a response fully meets the request.
Every rubric item must be a universal principle. If any rubric still contains topic-specific references (e.g., names, places, myths, numbers, historical facts), it is automatically invalid.

- **Two Distinct Categories:**
  - [Hard Rule]: Derived strictly from explicit requirements stated in the <request> (format, length, structure, forbidden/required elements, etc.).
  - [Principle]: Derived by abstracting any concrete cues into domain-agnostic quality criteria (e.g., clarity, correctness, sound reasoning, pedagogy).

- **Comprehensiveness:**
  The rubric must cover all critical aspects implied by the request and examples, including explicit requirements and implicit quality standards.

- **Conciseness & Uniqueness:**
  Each rubric must capture a distinct evaluation criterion. Overlapping or redundant criteria must be merged into a single rubric. Wording must be precise and free of repetition.

- **Format Requirements:**
  - Use a numbered list.
  - Each item starts with "The response" phrased in third person.
  - Append [Hard Rule] or [Principle] at the end of each item.
  - Do not include reasoning, explanations, or examples in the final output—only the rubrics.

Here is the request:
{prompt}

Please generate the rubrics for the above request.
\end{tcblisting}

\begin{tcblisting}{
  listing only,
  enhanced,
  colback=purple!5,
  colframe=purple!50,
  rounded corners,
  top=6pt,bottom=6pt,left=8pt,right=8pt,boxsep=4pt,
  title=Prompt for Judge Generation ({\name}),
  listing options={style=promptstyle}
}
You are a fair and impartial judge. Your task is to evaluate 'Response A' and 'Response B' based on a given instruction and a rubric. You will conduct this evaluation in distinct phases as outlined below.

### Phase 1: Compliance Check Instructions
First, identify the single most important, objective 'Gatekeeper Criterion' from the rubric.
- **A rule is objective (and likely a Gatekeeper) if it can be verified without opinion. Key examples are: word/paragraph limits, required output format (e.g., JSON validity), required/forbidden sections, or forbidden content.**
- **Conversely, a rule is subjective if it requires interpretation or qualitative judgment. Subjective rules about quality are NOT Gatekeepers. Examples include criteria like "be creative," "write clearly," "be engaging," or "use a professional tone."**

### Phase 2: Analyze Each Response
Next, for each Gatekeeper Criterion and all other criteria in the rubric, evaluate each response item by item.

### Phase 3: Final Judgment Instructions
Based on the results from the previous phases, determine the winner using these simple rules. Provide a final justification explaining your decision first and then give your decision.

---
### REQUIRED OUTPUT FORMAT
You must follow this exact output format below.

--- Compliance Check ---
Identified Gatekeeper Criterion: <e.g., Criterion 1: Must be under 50 words.>

--- Analysis ---
**Response A:**
- Criterion 1 [Hard Rule]: Justification: <...>
- Criterion 2 [Hard Rule]: Justification: <...>
- Criterion 3 [Principle]: Justification: <...>
- ... (and so on for all other criteria)

**Response B:**
- Criterion 1 [Hard Rule]: Justification: <...>
- Criterion 2 [Hard Rule]: Justification: <...>
- Criterion 3 [Principle]: Justification: <...>
- ... (and so on for all other criteria)

--- Final Judgment ---
Justification: <...>
Winner: <Response A / Response B>

Task to Evaluate:
Instruction:
{instruction}

Rubric:
{rubric}

Response A:
{response_a}

Response B:
{response_b}
\end{tcblisting}